\documentclass[preprint,12pt]{elsarticle}

\usepackage[margin=1in]{geometry}
\usepackage{dsfont}
\usepackage{bm}
\usepackage{amssymb}
\usepackage{amsthm}
\usepackage{amsmath}
\usepackage{xcolor}
\newtheorem{lemma}{Lemma}
\usepackage{bookmark}
\usepackage{makecell, comment}
\usepackage{color}

\usepackage[ruled,vlined,linesnumbered]{algorithm2e}
\SetKwInput{kwParameters}{Parameters}
\SetKwInput{kwHyperparameters}{Hyperparameters}
\usepackage[flushleft]{threeparttable}
\usepackage{graphicx}
\usepackage{booktabs} 
\usepackage{multirow}
\usepackage{longtable}
\usepackage{tabularx}
\usepackage[figuresright]{rotating}
\usepackage{colortbl}
\theoremstyle{definition}

\definecolor{myblue}{RGB}{0,112,192}

\journal{Engineering}

\begin{document}

\begin{frontmatter}




\title{ 
Toward Cooperative Driving in Mixed Traffic: An Adaptive Potential Game-Based Approach with Field Test Verification
}

\address[label1]{Key Laboratory of Road and Traffic Engineering of the State, Ministry of Education, Tongji University, Shanghai, 201804, China}
\address[label2]{State Key Laboratory of Intelligent Green Vehicle and Mobility, Tsinghua University, Beijing, 100084, China}
\address[label3]{State Key Lab of Intelligent Transportation System, School of Transportation Science and Engineering, Beihang University, Beijing, 100191, China }

\author[label1]{Shiyu Fang}

\author[label1]{Xiaocong Zhao}

\author[label1]{Xuekai Liu}

\author[label1]{Peng Hang \corref{cor1}}  

\author[label2]{Jianqiang Wang} 

\author[label3]{\\ Yunpeng Wang}

\author[label1]{Jian Sun  \corref{cor1}}

\cortext[cor1]{Corresponding author, email: \href{hangpeng@tongji.edu.cn}{hangpeng@tongji.edu.cn} (P. Hang), \href{sunjian@tongji.edu.cn}{sunjian@tongji.edu.cn} (J. Sun)}

\begin{abstract}
Connected autonomous vehicles (CAVs), which represent a significant advancement in autonomous driving technology, have the potential to greatly increase traffic safety and efficiency through cooperative decision-making. However, existing methods often overlook the individual needs and heterogeneity of cooperative participants, making it difficult to transfer them to environments where they coexist with human-driven vehicles (HDVs).
To address this challenge, this paper proposes an adaptive potential game (APG) cooperative driving framework. First, the system utility function is established on the basis of a general form of individual utility and its monotonic relationship, allowing for the simultaneous optimization of both individual and system objectives. Second, the Shapley value is introduced to compute each vehicle's marginal utility within the system, allowing its varying impact to be quantified. Finally, the HDV preference estimation is dynamically refined by continuously comparing the observed HDV behavior with the APG's estimated actions, leading to improvements in overall system safety and efficiency. 
Ablation studies demonstrate that adaptively updating Shapley values and HDV preference estimation significantly improve cooperation success rates in mixed traffic. Comparative experiments further highlight the APG's advantages in terms of safety and efficiency over other cooperative methods. Moreover, the applicability of the approach to real-world scenarios was validated through field tests.
\end{abstract}



\begin{keyword}
Connected autonomous vehicles \sep Cooperative driving \sep Potential game \sep Adaptive weight \sep Field test
\end{keyword}

\end{frontmatter}



\section{Introduction}

With the continuous advancement of autonomous driving and vehicle-to-everything (V2X) technologies, 
CAVs 
have emerged as a pivotal direction in the evolution of autonomous driving systems, with increasing attention given to how cooperative decision-making can enhance decision-making efficiency and safety. In response, the National Innovation Center of Intelligent and Connected Vehicles of China (CICV) has released a strategic roadmap for cooperative driving, designating cooperative driving as a key priority for the next stage of development. Additionally, the Society of Automotive Engineers (SAE) J3216 standard defines four levels of cooperative driving automation (CDA): state sharing, intent sharing, agreement seeking, and prescriptive. On the basis of this framework, the Federal Highway Administration (FHWA) initiated the CARMA program, demonstrating that cooperative decision-making can significantly enhance the performance of CAVs in complex scenarios. These efforts underscore the critical role of CAVs in advancing next-generation mobility systems.

Despite notable progress, existing cooperative decision-making frameworks remain limited in their ability to handle hybrid traffic, primarily because of three unresolved issues:
1) Idealized modeling of HDVs: Most studies simplify HDVs as deterministic or passive entities using car-following or binary cooperation--competition models \citep{cui2025vehicle, zhou2022Reasoning, chi2021game}.
2) Neglect of behavioral heterogeneity: Such simplifications ignore the diversity and adaptability of human driving preferences, which are crucial for realistic interaction modeling.
3) Insufficient real-world validation: Field experiments indicate that system efficiency can decrease by up to 14\% when they are applied to mixed-traffic conditions \citep{10026667}, revealing a substantial gap between simulation-based frameworks that focus on maximizing system efficiency and their real-world applicability.


Two fundamental limitations hinder current CAV frameworks in hybrid traffic. \textbf{System--Individual Decision Conflict}: While cooperative systems prioritize global objectives such as traffic throughput, individual vehicles balance safety, comfort, and efficiency, leading to inherent conflicts between system-wide and individual decision-making perspectives. Studies have shown that rigid system-centric optimization, which overlooks individual needs, can lead to a 5\% increase in the number of accidents in mixed-autonomy environments \citep{10123387}, as HDVs resist sacrificing their preferences for collective gains. \textbf{Cooperation Participants Heterogeneity}: Cooperative models often assume that HDVs are homogeneous, overlooking the behavioral diversity driven by individual decision preferences and the unsymmetrical impact on the system caused by different conflict relationships \citep{wang2024homogeneous, huang2021dynamic}. As a result, when faced with real-world diverse driving styles and complex conflict topologies, cooperative strategies may become suboptimal. Recent studies have shown that the introduction of CAVs does not always lead to improved traffic performance; under certain mixed-traffic conditions, CAV deployment may inadvertently exacerbate congestion or reduce network efficiency \citep{talebpour2016influence, wu2022influence}. In summary, the most pressing challenges in cooperative driving lie in optimizing system-level performance while ensuring individual equilibrium and in real-time quantification and adaptation of cooperative strategies on the basis of the diverse characteristics of participants \citep{huang2024adaptive, zheng2025oscillation, liu2025morning}.


\begin{figure*}[h]
  \begin{center}
  \centerline{\includegraphics[width=6.5in]{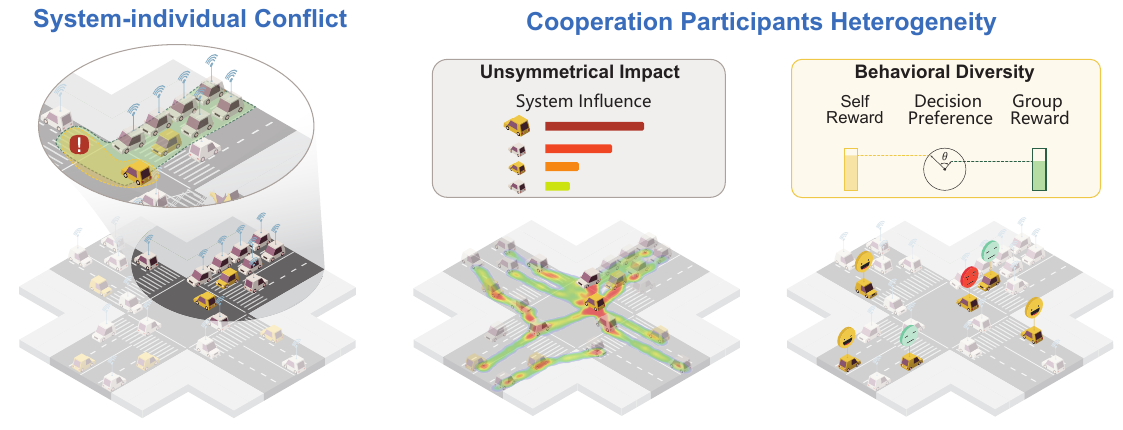}}
  \caption{Challenges of cooperative driving under mixed-traffic conditions. A) Conflicts between individual and system-wide objectives. B) Heterogeneity in participant impact and decision preferences. }\label{fig: thumbnail}
  \end{center}
  \vspace{-0.8cm}
\end{figure*}

To address the aforementioned issues, this paper proposes an adaptive potential game (APG) framework for cooperative decision-making in mixed-traffic environments. By establishing a monotonic relationship between individual and system utilities, system-level optimization also accounts for the needs of individual cooperative participants. Furthermore, an adaptive weight method is introduced to dynamically quantify each participant's impact on the system and update estimates of their decision preferences, enabling the framework to effectively handle heterogeneous participants within the cooperative system. The contributions of this study are summarized as follows:


\begin{enumerate}
\item Driving cooperation is formulated as a potential game-theoretic problem, where the system utility function is constructed in a bottom-up manner from the general expression of individual utility, ensuring an equipotential relationship between changes in individual and system utilities. With this design, achieving optimal system utility becomes a sufficient condition for reaching individual equilibrium among cooperative participants, thereby enabling the simultaneous optimization of both individual and system objectives.


\item The different impacts of cooperative participants and diverse HDV preferences are represented by two sets of weights. Shapley values are used to calculate each vehicle's marginal utility and its impact on the system, and a backward propagation method updates the HDV preferences by comparing estimated and observed behaviors. By adaptively updating these weights, the method identifies the characteristics of heterogeneous participants, thereby improving the adaptability of the model in mixed-traffic environments.




\item The experimental results demonstrate that the proposed APG framework outperforms existing methods across all penetration rates in terms of safety and efficiency. Additionally, a field test is conducted to demonstrate the applicability of the proposed method in real-world mixed-traffic environments.
\end{enumerate}

The remainder of the paper is organized as follows. Section \uppercase\expandafter{2} presents a literature review of the relevant research. Section \uppercase\expandafter{3} formulates the cooperative driving problem and introduces the cooperative driving framework for mixed traffic. Section \uppercase\expandafter{4} discusses the details of the APG framework. In Section \uppercase\expandafter{5}, we validate our cooperative driving framework through simulation experiments and field tests. Finally, conclusions are presented in Section \uppercase\expandafter{6}.

\section{Literature Review}
Achieving efficient and reliable cooperative decision-making has become an increasingly popular topic in recent years. Numerous methods have been proposed to address this challenge. Upon review, the prevailing approaches for cooperative driving can be categorized into optimization-based, rule-based, learning-based, and game-based methods.

\subsection{Optimization-based approaches}
The optimization-based approach aims to maximize or minimize the system utility function to achieve a specific goal. Owing to the intricate conflict relationships and multiple interacting opponents in mixed-traffic environments, studies often introduce simplifications to strike a balance between computation time and performance. A common practice is to partition the conflict area into multiple zones, each of which can be occupied by only a single vehicle at any given time \citep{yu2019managing, xu2019cooperative}. Another strategy is to decompose the cooperative driving problem into two layers. The upper layer generates a set of candidate trajectories on the basis of space--time conflict constraints, whereas the lower layer selects feasible trajectories from among the candidate trajectories to optimize an objective function \citep{zhang2021trajectory, hu2019trajectory}. However, as mentioned earlier, these methods focus primarily on system-wide optimization while overlooking individual needs, limiting their applicability in more realistic scenarios.

\subsection{Rule-based approaches}
Unlike the optimization-based method, which attempts to obtain a globally optimal solution, the rule-based method attempts to quickly generate a feasible solution through preset regulations. To achieve this goal, virtual rotation projection was used to map vehicles from multiple entrances onto a one-dimensional lane for simplification \citep{chen2021connected, hu2021constraint}. Building on this, an improved depth-first search tree (iDFST) was combined with a predefined communication topology to determine vehicle trajectories \citep{xu2018distributed}. However, reservation-based methods may not be suitable for environments with HDVs because of their uncontrollable behavior, which can raise safety issues for the cooperation system \citet{chen2022conflict}.

\subsection{Learning-based approaches}
Learning-based methods derive driving policies by either extracting features from expert drivers in real-world data or enabling agents to learn through continuous interaction with a simulated environment \citep{klimke2023automatic, shi2021connected}. \citet{ZHOU2024104807} proposed a reasoning graph-based reinforcement learning (RGRL) framework to manage conflicting platoons with complex interactions, integrating a graph convolution network (GCN) to capture multiagent cooperation features. \citet{fang2025towards} leveraged large language models and memory modules to enhance CAV performance through negotiation and cooperation. However, these methods rely primarily on data-driven patterns and numerical relationships and overlook the underlying logic of human decision-making.

\subsection{Game-based approaches}
Game theory formulates the relationship between incentive structures and players' strategies and has been widely used to replicate social decision-making \citep{schwarting2019social, camerer2006does}. Among the various methodologies, cooperative games (CGame) and coalition games are among the most common model-based approaches for cooperative decision-making, in which optimal strategies are derived through a superadditive payoff function \citep{zimmermann2018carrot}. These methods have been widely applied in various scenarios, including merging zones \citep{li2024cooperative}, roundabouts \citep{chandra2022gameplan, hang2021decision}, and intersections \citep{hang2021cooperative}. In addition, level-k game theory offers another viable approach by modeling human reasoning at different depths, generating decisions aligned with different cognitive levels \citep{liu2024cooperative}. Stackelberg games, in contrast, establish leader--follower relationships and construct decision trees over the action space to optimize cooperative strategies \citep{kang2024metaverses}. While game-theoretic models effectively capture human behavior in conflict scenarios, many existing approaches impose overly rigid assumptions about decision-making and often assume that participants are either purely cooperative or purely competitive. However, real-world HDVs exhibit diverse behavioral traits, necessitating adaptive preference modeling on the basis of observed behaviors. 

In summary, despite extensive efforts in cooperative decision-making across various scenarios, the lack of consideration for cooperation participant heterogeneity or the reliance on overly simplified assumptions has limited these methods to simulation-based validation. Achieving real-world cooperation remains a challenge.

\section{Problem Formulation and Proposed Framework}
\subsection{Problem formulation}
Cooperative driving has the potential to significantly improve the efficiency of traffic systems. However, in mixed-traffic environments, a key challenge lies in balancing individual vehicle needs with overall system objectives, making it essential to design cooperative strategies that account for both individual and collective utilities. It seeks to find a balance between system-wide utility and individual utility. Therefore, this naturally formulates a game-theoretic problem, namely,
\begin{equation}
    \mathbf{a}^* = \arg \max_{a_i \in A} \mathbb{E} \left[ \lambda \cdot \mathcal{S}(\mathbf{a}) + (1 - \lambda)\cdot\sum_{i\in N} u_i(a_i, \tilde{a}_{-i}) \right]
    \label{eq: problem formulation}
\end{equation}
where $\mathbf{a}^*$ is the optimal action list for the cooperative driving participants, $A$ and $N$ represent the action space and cooperation participant set, respectively, and $\lambda \in [0,1]$ is a tuning factor used to balance the weight between individual utility $u_i$ and system utility $\mathcal{S}$. $a_i$ represents the action of individual $i$. $\tilde{a}_{-i}$ refers to the actions of all participants except for individual $i$, where there is uncertainty for individual $i$.

Additionally, to focus on cooperative decision-making in mixed traffic, this study adopts the following assumptions: 1) In both simulations and field tests, when a vehicle enters a circular area with a radius \(R \) centered at the bottleneck, its position, speed, and other relevant information can be accurately obtained through vehicle-to-vehicle communication without bias. 2) Lane-changing behavior upstream of the bottleneck is ignored, and once a vehicle enters the cooperative system, it does not abruptly change its destination.

\subsection{Cooperative driving framework}
Despite the impressive performance of cooperative driving in fully CAV environments, its effectiveness sharply declines when it transitions to more realistic, human-involved scenarios. This is primarily due to insufficient consideration of individual needs and the heterogeneity of cooperation participants.

\begin{figure*}[t]
  \begin{center}
  \centerline{\includegraphics[width=6.5in]{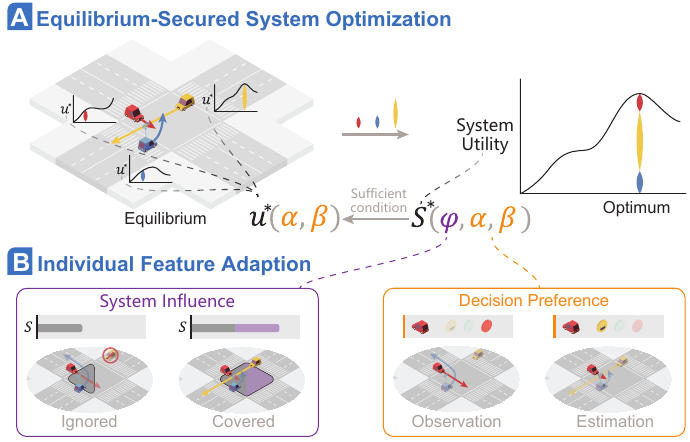}}
  \caption{Overview of the proposed APG cooperative driving framework. A) Equilibrium-secured system optimization: The system utility $\mathcal{S}$ is constructed from the individual utility $u$ in a bottom-up manner such that the optimality of system utility becomes a sufficient condition for individual equilibrium. B) Individual feature adaptation: Each individual's marginal contribution to the system utility is computed to quantify its influence $\varphi$. Afterward, by comparing the observed action of an individual with the APG-estimated action, the estimation of their decision preferences $\alpha$ and $\beta$ is updated.}\label{fig: pg-framework}
  \end{center}
  \vspace{-0.8cm}
\end{figure*}

To address this issue, this paper proposes an APG cooperative decision-making framework, as illustrated in Fig.~\ref{fig: pg-framework}. Constructing a system utility function that accommodates diverse individual needs is crucial for enhancing the applicability of cooperative methods in mixed-traffic environments. To this end, this paper starts from a general formulation of individual utility, and by establishing an equal potential relationship between individual and system utilities, the system utility function is derived from the bottom up. With this design, when the system utility reaches its optimum, no cooperative participant can unilaterally alter their behavior to gain additional benefit, thereby achieving a game-theoretic equilibrium among individuals. The simultaneous optimization of individual and system utilities is achieved.

Furthermore, this paper decomposes individual heterogeneity into two key aspects: the unsymmetrical impact on the system and the diversity of behavioral preferences. These characteristics are represented by two distinct sets of weights. The Shapley value is used to quantify each vehicle's contribution by calculating its marginal utility, reflecting its relative impact on the system. In addition, the decision preference estimation for HDVs is continuously refined through a back-propagation approach that compares the optimal cooperative actions generated by the APG with the observed behaviors of HDVs. This alignment gradually improves the consistency between the theoretical optimal action and observed actions, thereby increasing the safety and efficiency of the cooperative system.




\section{Methodology}
\subsection{Equilibrium-Secured System Optimization}
To optimize both system and individual objectives simultaneously, we first establish a fundamental formulation for individual driving decision utility. Decision-making is generally influenced by both individual strategies and the actions of others, resulting in a utility that encompasses both self-related and group-related factors. The self-related part refers to those utilities that are affected only by personal actions, whereas the group-related part is affected by both personal and other actions. By orchestrating two kinds of rewards, realistic selfish and altruistic driving behaviors can be reproduced. In addition, considering that frequent changes in action over a short period of time can cause a significant reduction in comfort, decisions are made with the foresight of several steps ahead. Inspired by \citet{10101707}, the individual utility is as follows:
\begin{equation}
    u_{i} = \sum_{t=0}^{T}\gamma^{t}(r_{s,i}^{t}(a_{i}) +  \sum\limits_{j\in N,j\neq i}r_{g,ij}^{t}(a_{i},a_{j})) 
    \label{eq: individual utility}
\end{equation}
where $r_{s,i}^{t}$ and $r_{g,ij}^{t}$ are the self-related reward and group-related reward, respectively; $T$ is the planning horizon; and $\gamma$ is the discount factor and represents the relative weight of immediate rewards versus future rewards, reflecting their diminishing importance in decision-making over time.

More specifically, $r_{s,i}^{t}$ and $r_{g,ij}^{t}$ are designed as follows:
\begin{equation}
\begin{aligned}
    & r_{s,i}^{t}(a_{i}) = a_{i}^{t} - d_{o,i}^{t} - j_{i}^{2,t} \\
    & r_{g,ij}^{t}(a_{i},a_{j}) = \Delta TTCP_{ij}^{t} = |\frac{d_{cp,i}^{t}}{v_{i}^{t}} - \frac{d_{cp,j}^{t}}{v_{j}^{t}}|
    \label{reward self}
\end{aligned}
\end{equation} 
where $d_{o,i}^{t}$ represents the distance to the desired destination, indicating the efficiency of the driving process; $j_{i}^{2,t}$ is the square of the acceleration, reflecting the comfort of the driving process; and $TTCP$ denotes the time to the collision point. The safety of the driving process is represented by calculating the current distance to the conflict point $d_c$ and the speed $v$ of the two vehicles.

However, modeling the utility of a cooperative system is more challenging because it involves multiple participants and complex relationships. Additionally, individual decision-making is often driven by personal preferences, whereas system-level decision-making aims to maximize overall efficiency while ensuring safety. This disparity gives rise to a fundamental conflict between user equilibrium and system optimality. Existing cooperative methods typically adopt a top-down approach and focus on maximizing the efficiency of the overall system. Nevertheless, these methods often neglect individual objectives, which limits their practicality in real-world applications. In addition, these methods overlook the mutual feedback between the system and individuals, where individual actions are influenced by other participants in the system, and the system state is the result of the aggregation of individual actions. 

Therefore, to establish an objective function that is practically feasible in real-world mixed scenarios, it is crucial to introduce constraints that simultaneously consider system-level and individual-level objectives. This demand naturally frames the cooperative driving problem of mixed traffic as a game. However, current research in game theory primarily addresses these dynamics through the lens of cooperation versus noncooperation. This binary perspective is an overly strong assumption that does not accurately reflect the diverse behaviors exhibited by real-world human drivers.

To address this issue, instead of predefining whether drivers act cooperatively or noncooperatively, this paper establishes a monotonic function that links system utility to individual utilities, formulated as follows:

\begin{equation}
    u_{i}(a_{i},a_{-i}) - u_{i}(x_{i},a_{-i}) > 0 \iff \mathcal{S}(a_{i},a_{-i}) - \mathcal{S}(x_{i},a_{-i}) > 0, \forall a_{i},x_{i} \in A
    \label{eq: potential function definition}
\end{equation}
where $x_{i}$ represents any participant's action within the action space $A$. The above equation holds for any participant in the cooperative system. Since the system utility and individual utility change in a way that preserves the same potential, this game is referred to as a potential game, and the system utility that satisfies Eq.~\ref{eq: potential function definition} is called the potential function.

On the basis of the fundamental definition of the potential function, the following lemma can be derived: 1) the existence of pure-strategy Nash equilibrium, 2) the finite improvement property, and 3) the global optimality of equilibrium. 

First, people have different tendencies for different strategies considering the given information. This results in a probability distribution of each strategy, known as the mixed strategy. Mathematically, the pure strategy is a special case of a mixed strategy, where the probability of one strategy equals one and those of the other strategies equal zero. Therefore, with the pure strategy, decision-makers choose a deterministic strategy. Lemma 1 states the existence of pure strategy NE if the strategy space is finite.

\begin{lemma}
	A finite potential game possesses at least one pure strategy NE.
\end{lemma}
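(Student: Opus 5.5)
The plan is the standard argument that a maximizer of the potential is an equilibrium. Work with the ordinal potential property in Eq.~\ref{eq: potential function definition}: for every participant $i$ and every pair of actions $a_i,x_i\in A$, a unilateral change raises $u_i$ exactly when it raises $\mathcal{S}$. Since the game is finite, the joint action space $A^{|N|}$ is a finite, nonempty set. Hence the real-valued function $\mathcal{S}$ attains a maximum on it. I will fix a maximizer
\begin{equation*}
\mathbf{a}^{\star}=(a_1^{\star},\dots,a_{|N|}^{\star})\in\arg\max_{\mathbf{a}\in A^{|N|}}\mathcal{S}(\mathbf{a}),
\end{equation*}
which exists simply because a finite set of real numbers has a largest element. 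No fixed-point or continuity argument is needed.

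The second step shows that $\mathbf{a}^{\star}$ is a pure strategy NE, arguing by contradiction. Suppose some participant $i$ has a profitable unilateral deviation, i.e.\ there is $x_i\in A$ with $u_i(x_i,a_{-i}^{\star})-u_i(a_i^{\star},a_{-i}^{\star})>0$. Applying Eq.~\ref{eq: potential function definition} with the roles of the two actions taken accordingly gives $\mathcal{S}(x_i,a_{-i}^{\star})-\mathcal{S}(a_i^{\star},a_{-i}^{\star})>0$. This contradicts the maximality of $\mathbf{a}^{\star}$. Hence no participant can strictly improve by deviating alone, so $u_i(a_i^{\star},a_{-i}^{\star})\ge u_i(x_i,a_{-i}^{\star})$ for all $i\in N$ and all $x_i\in A$. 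This is precisely the definition of a pure strategy NE.

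The only point requiring care is the modelling, not the logic. The individual utility in Eq.~\ref{eq: individual utility} is a discounted sum over a planning horizon, so the "action" $a_i$ should be read as the whole planned action sequence over $t=0,\dots,T$. For finiteness I need the action space $A$ to be a finite (discretized) set, so that the strategy space remains finite. I will state this interpretation explicitly. I also note that only the forward implication of Eq.~\ref{eq: potential function definition} (individual gain $\Rightarrow$ system gain) is used, so the argument holds for ordinal as well as exact potentials.

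As a remark, the same reasoning also yields the finite improvement property claimed next. Each strict unilateral improvement strictly increases $\mathcal{S}$, so no improvement path can revisit a profile. Since the profile set is finite, every such path terminates, and it terminates at a NE.
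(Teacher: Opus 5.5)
Your proposal is correct and follows the same route as the paper: take a maximizer of the potential and observe that no participant can profitably deviate from it unilaterally. You also make explicit two steps the paper's proof leaves implicit. First, finiteness guarantees that the maximum is attained. Second, a profitable deviation would, via Eq.~\ref{eq: potential function definition}, strictly raise the potential and so contradict maximality.
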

\begin{proof}
	Let $P$ be the potential function for game $G(u_{1}, u_{2},...,u_{i})$, where $\boldsymbol{a}$ is a vector of all player strategies that satisfies
    \begin{center}
        $P(\boldsymbol{a}) \geq P(x_{i}, a_{-i}), \forall x_{i} \in A $ \\
    \end{center}
    Consequently, if $P$ reaches the maximum, then the potential game $G$ possesses a pure strategy NE, with $\boldsymbol{a}$ being the equilibrium point. In addition, if $P$ reaches its maximum at more than one point, potential game $G$ possesses multiple NEs.
\end{proof}

Moreover, improvement path $\boldsymbol{\eta}=(\boldsymbol{a}^{0},\boldsymbol{a}^{1},...,\boldsymbol{a}^{k})$ is an iterative path from the initial point to the NE, where for every $k\geq 1$ there exists a player that can improve the value of the potential function by adjusting his strategy. $\boldsymbol{a}^{k}=(x_{i}, \boldsymbol{a}_{-i}^{k-1})$ is the strategy vector at the $k$ step, which is influenced by player $i$'s strategy $x_{i}$ and satisfies $P(\boldsymbol{a}^{k}) \geq P(\boldsymbol{a}^{k-1})$. Moreover, $\boldsymbol{a}^{0}$ is called the initial point and represents every player's initial strategy, and the last element of improvement path $\boldsymbol{\eta}$ is called the terminal point, which is also an NE for the potential game $G$. 

\begin{lemma}
	Every potential game has a finite improvement property.
\end{lemma}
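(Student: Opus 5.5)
We will prove Lemma 2 in the sense in which it is true: every \emph{finite} potential game, i.e.\ one where the action space $A$ and the participant set $N$ are finite, has the finite improvement property. Every improvement path $\boldsymbol{\eta}=(\boldsymbol{a}^{0},\boldsymbol{a}^{1},\dots)$ in which each step is a strict unilateral improvement for the deviating player is finite, and its terminal point is an NE. We restrict to finite games, as in Lemma 1, because for infinite action spaces the claim fails in general. We also read the improvement steps as strict, since with the weak inequality $P(\boldsymbol{a}^{k})\ge P(\boldsymbol{a}^{k-1})$ a path could cycle forever between profiles of equal potential.

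The first step transfers individual improvements to the potential. If $\boldsymbol{a}^{k}=(x_i,\boldsymbol{a}^{k-1}_{-i})$ is an improvement step for player $i$, then $u_i(x_i,\boldsymbol{a}^{k-1}_{-i})-u_i(\boldsymbol{a}^{k-1})>0$. By the defining relation in Eq.~\ref{eq: potential function definition}, applied to player $i$ with the other players' actions fixed, this gives $\mathcal{S}(\boldsymbol{a}^{k})>\mathcal{S}(\boldsymbol{a}^{k-1})$. Along any improvement path the potential is therefore strictly increasing: $\mathcal{S}(\boldsymbol{a}^{0})<\mathcal{S}(\boldsymbol{a}^{1})<\cdots$.

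The second step is the counting argument. Because the potential strictly increases, no profile can appear twice on the path; a repeat $\boldsymbol{a}^{k}=\boldsymbol{a}^{m}$ with $k<m$ would give $\mathcal{S}(\boldsymbol{a}^{k})<\mathcal{S}(\boldsymbol{a}^{m})=\mathcal{S}(\boldsymbol{a}^{k})$. The set of joint profiles $A^{|N|}$ is finite, so every improvement path has length at most $|A|^{|N|}$ and thus terminates.

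The third step identifies the terminal point. If the last profile $\boldsymbol{a}^{K}$ were not an NE, some player $i$ would have an $x_i$ with $u_i(x_i,\boldsymbol{a}^{K}_{-i})>u_i(\boldsymbol{a}^{K})$, and the path could be extended by one more step. Hence a maximal improvement path ends at a pure-strategy NE, which is consistent with Lemma 1.

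The main obstacle is not technical but definitional: the lemma must be stated with finiteness and with strict improvements, and Eq.~\ref{eq: potential function definition} must be used in the direction from individual gain to potential gain. Once those conventions are fixed, the argument is a standard monotone-sequence argument over a finite set.
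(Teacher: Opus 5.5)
Your proof is correct and follows the same route as the paper. Both arguments convert each unilateral improvement into a strict increase of the potential via Eq.~\ref{eq: potential function definition}, then use finiteness of the strategy space to conclude that the strictly increasing sequence must terminate. Your restriction to finite games and to strict improvement steps makes explicit what the paper relies on silently: its proof invokes ``$A$ is a finite strategy space'' and strict inequalities, even though the lemma says ``every'' potential game and the improvement-path definition uses $\geq$. Your no-repeat bound $|A|^{|N|}$ and your identification of the terminal point as an NE likewise just spell out steps the paper leaves implicit.
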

\begin{proof}
	For every improvement path $\boldsymbol{\eta}=(\boldsymbol{a}^{0},\boldsymbol{a}^{1},...,\boldsymbol{a}^{k})$, we have Eq.~\ref{eq: potential function definition}: 
    \begin{center}
    $P(\boldsymbol{a}^{0}) < P(\boldsymbol{a}^{1}) < P(\boldsymbol{a}^{2}) < ... < P(\boldsymbol{a}^{k})$ \\
    \end{center}
    As $A$ is a finite strategy space, the sequence $\boldsymbol{\eta}$ must be finite, and game $G$ has a finite improvement property.
\end{proof}

In our problem, owing to the dynamic constraints of the vehicles, their strategies lie within a finite space, thereby satisfying the requirements of a potential game.

Finally, the existence of multiple NEs may result in inconsistency within a cooperative system. In most game models, when multiple equilibrium points coexist, optimization is often achieved by methods such as risk-dominant equilibrium or Pareto equilibrium to select the optimal equilibrium. However, the heterogeneity of players may cause them to converge to different equilibrium points, potentially leading to system collapse. One of the key reasons for this issue is that most game models rely on strong assumptions about player behavior. 

In contrast, in potential games, instead of explicitly assuming that players are inherently cooperative or competitive, the model simply establishes the relationship between utilities to represent the interactions among players. Therefore, the potential game transforms the relation between multiple equilibrium points into the relation between the local optimal and global optimal and guarantees the uniqueness of the global optimal, which effectively prevents deadlock or collision. On the basis of Lemma 1, the global optimality of equilibrium can be deduced. Furthermore, this equilibrium point is the terminal point of the improvement path $\eta$.

\begin{lemma}
	A finite potential game has an equilibrium point that optimizes the potential function globally.
\end{lemma}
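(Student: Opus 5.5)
The plan is to take a global maximizer of the potential function and show directly that it is a pure strategy NE, and that it is the terminal point of an improvement path, as the paper announces just before the statement. Throughout, $P$ (written $\mathcal{S}$ earlier) denotes the potential function of the finite game $G$. It satisfies the ordinal relation of Eq.~\ref{eq: potential function definition}: for every player $i$, every $a_{-i}$ and every $a_i,x_i\in A$,
\[
u_i(a_i,a_{-i})-u_i(x_i,a_{-i})>0 \iff P(a_i,a_{-i})-P(x_i,a_{-i})>0 .
\]

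\textbf{Step 1 (existence of a global maximizer).} The game is finite, so the joint strategy space $A^{|N|}$ is a finite set. A real-valued function on a nonempty finite set attains its maximum, so there is some $\boldsymbol{a}^\star$ with $P(\boldsymbol{a}^\star)\ge P(\boldsymbol{a})$ for every joint profile $\boldsymbol{a}$.

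\textbf{Step 2 (the maximizer is an NE).} Suppose, for contradiction, that some player $i$ has a profitable unilateral deviation $x_i$, meaning $u_i(x_i,a^\star_{-i})>u_i(a^\star_i,a^\star_{-i})$. The equivalence above (applied with the roles of $a_i$ and $x_i$ as written) then gives $P(x_i,a^\star_{-i})>P(\boldsymbol{a}^\star)$. This contradicts the global maximality of $\boldsymbol{a}^\star$. Hence no player can gain by deviating alone, and $\boldsymbol{a}^\star$ is a pure strategy NE that optimizes the potential globally. This is exactly the mechanism behind Lemma 1.

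\textbf{Step 3 (link to improvement paths).} To match the remark that this equilibrium is the terminal point of an improvement path $\boldsymbol{\eta}$, I would use Lemma 2. Any improvement path strictly increases $P$ and therefore terminates, so its endpoint admits no profitable unilateral move and is an NE. Starting the path at the global maximizer $\boldsymbol{a}^\star$ gives the trivial path, whose terminal point is $\boldsymbol{a}^\star$ itself.

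\textbf{Main obstacle.} The delicate point is that a general improvement path may stop at a local maximum of $P$ rather than the global one. The claim must therefore be read as asserting \emph{existence} of a globally optimizing equilibrium, not that every equilibrium or every path terminal is globally optimal. I would state this caveat explicitly. The remaining care is to apply the strict-inequality equivalence in the correct direction, so that a utility increase for player $i$ forces a strict increase in $P$.
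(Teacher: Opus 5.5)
Your proof is correct, but it takes a different route from the paper's. The paper starts from the terminal point $\boldsymbol{a}^{*}$ of an improvement path and notes that $P(\boldsymbol{a}^{*})\ge P(\boldsymbol{a}^{i})$ for every profile on that path. From this it infers $u_i(\boldsymbol{a}^{*})\ge u_i(\boldsymbol{a}^{i})$ for every player, and concludes that $\boldsymbol{a}^{*}$ is both an NE and the global optimum. You instead start from a global maximizer of $P$, which exists because $A^{|N|}$ is finite. You then show by contradiction, using only the forward direction of Eq.~\ref{eq: potential function definition}, that no unilateral deviation is profitable. This is a fully argued version of the paper's Lemma~1 and already proves the statement. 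Your Step~3 is therefore only a consistency remark; strictly, only a \emph{maximal} improvement path is guaranteed to end at a profile with no profitable move. The paper's route, if it worked, would give something stronger: a guarantee that improvement dynamics from any starting profile reach the globally optimal equilibrium. That stronger claim is false. Dominating the profiles on one's own path says nothing about profiles off the path. Moreover, the step $u_i(\boldsymbol{a}^{*})\ge u_i(\boldsymbol{a}^{i})$ is not implied by Eq.~\ref{eq: potential function definition} when the two profiles differ in more than one coordinate. For a counterexample, take a two-player coordination game with common payoff (and potential) $2$ at $(A,A)$, $1$ at $(B,B)$, and $0$ otherwise. The improvement path starting at $(B,B)$ terminates immediately at an NE that does not maximize $P$. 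Your ``main obstacle'' caveat identifies exactly this gap, and reading the lemma as an existence statement, as you do, is what makes it provable.
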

\begin{proof}
	Let $\boldsymbol{a}^{*}$ be the terminal point of improvement path $\boldsymbol{\eta}=(\boldsymbol{a}^{0},\boldsymbol{a}^{1},...,\boldsymbol{a}^{*})$. For any $\boldsymbol{a}^{i} \in \boldsymbol{\eta}$, there is
    \begin{center}
    $P(\boldsymbol{a}^{*}) \geq P(\boldsymbol{a}^{i})$ \\
    \end{center}
    According to the definition of a potential game, for every player $i$, $u_{i}(\boldsymbol{a}^{*}) \geq u_{i}(\boldsymbol{a}^{i})$, which leads to Nash equilibrium. Therefore, the terminal point $\boldsymbol{a}^{*}$ is the Nash equilibrium point, which is also the global optimum.
\end{proof}

This property ensures that the optimization of system utility naturally leads to the improvement of individual utilities, effectively reconciling the conflict between user equilibrium and system optimality. Moreover, since the potential function serves as a bridge between individual and system-level objectives, its maximization guarantees that no participant can further increase their own utility through unilateral deviation. As a result, an NE is established between the system and individuals, fostering a stable and efficient cooperative driving strategy in mixed-traffic environments.

Furthermore, according to the definition of the potential function and the design of individual utility, we can derive the expression for the system utility as follows:
\begin{equation}
    \mathcal{S}(\boldsymbol{a}) = \sum_{t=0}^{T}\gamma^{t}(\sum\limits_{i \in N}r_{s,i}^{t}(a_{i}) +  \sum\limits_{i \in N}\sum\limits_{j\in N,j< i}r_{g,ij}^{t}(a_{i},a_{j}))
    \label{eq: potential function}
\end{equation}

When the system utility is defined in the form of Eq.~\ref{eq: potential function}, Eq.~\ref{eq: potential function definition} is satisfied, and for any cooperative participant, the change in their individual utility equals the change in the system utility. In this case, the game is called the exact potential game. Therefore, solving for the optimal cooperative strategy is equivalent to solving for the optimal solution to Eq.~\ref{eq: potential function}.

However, owing to differences in the positions, states, and conflict relationships of the vehicles within the system, the impact of each vehicle on the system varies. During the optimization process, as system managers, we are more concerned with potential failure points that could lead to severe conflicts, deadlocks, or collisions rather than vehicles that have not yet entered or have already left the bottleneck area. To achieve this, the Shapley value is introduced to quantify these unsymmetrical impacts dynamically.

\subsection{Individual Feature Adaption}
In mixed-traffic scenarios, the uncertainty of cooperation is driven largely by individual heterogeneity, which stems from both the participant's impact on the system and their decision preferences. To address this issue, we quantify each vehicle's impact on the system using Shapley values, while a backward propagation method is employed to continuously update the estimations of individual preferences.

\subsubsection{Quantifying the unsymmetrical impact through the Shapley value}  %
The Shapley value was originally developed by \citet{shapley1953value} for n-player games in coalitional form with side payments, with the goal of determining each individual's share of the overall payoff on the basis of their contribution, following the principle of fair distribution. By calculating the expected marginal contribution, a unique weight for each player, which reflects their contribution to the system, can be derived through Eq.~\ref{eq: shapley value definition}. 
\begin{equation}
    \varphi_{i}(\mathcal{S}) = \sum_{n \subset N, i \in n} \frac{(N-|n|-1)!|n|!}{N!}[\mathcal{S}(n) - \mathcal{S}(n-\{i\})] \\
    \label{eq: shapley value definition}
\end{equation}
where $\varphi_{i}(\mathcal{S})$ is the Shapley value of player $i$ in game $\mathcal{S}$, $n$ is a coalition that is a subset of all player sets $N$, and $\mathcal{S}(n)$ and $\mathcal{S}(n-\{i\})$ are the contributions of coalition $c$ with and without player $i$, respectively. Therefore, their difference constitutes the marginal contribution of player $i$.

The formula for calculating the Shapley value is established by designing a function that satisfies several axioms \citep{winter2002shapley}. The linearity, symmetry, and null player property of the Shapley value ensure the effective operation of the cooperative system.

The linearity of the Shapley value indicates that for any two games, if the utilities of the two games are linearly combined, then each participant's Shapley value will be the linear combination of their respective Shapley values, which is as follows:
\begin{equation}
    \varphi (\alpha u_i+\beta u_j) = \alpha \varphi (u_i) + \beta \varphi (u_j)
    \label{eq: linearity}
\end{equation}
where $u_i$ and $u_j$ are two game problems and $\alpha$ and $\beta$ are arbitrary constants. The linearity of the Shapley value allows multiple game problems to be transformed into a single combined problem for the solution. Therefore, when individual utility is split into a self-related component and a group-related component and then linearly combined to obtain the system utility, the Shapley value remains unchanged. That is,
\begin{equation}
    \varphi_{i}(\mathcal{S}) = \varphi (u_i) =\varphi (r_{s,i}) + \sum\limits_{j\in N,j\neq i}\varphi(r_{g,ij})
    \label{eq: potential linearity}
\end{equation}

On the basis of Eq.~\ref{eq: potential linearity}, when the marginal utility of a vehicle is calculated to determine the Shapley value, only the individual utility of that vehicle must be computed. There is no need to calculate the marginal utility for each different coalition, which greatly simplifies the calculation.

Symmetry is another important property of the Shapley value, as shown in Eq.~\ref{eq: symmetry}.
\begin{equation}
    \varphi_{i}(\mathcal{S}) = \varphi_{j}(\mathcal{S})  \iff \mathcal{S}(c-\{i\}) = \mathcal{S}(c-\{j\}),  \forall n \subset N
    \label{eq: symmetry}
\end{equation}

The symmetry of the Shapley value ensures that its value depends solely on the vehicle's marginal utility rather than its inherent properties. This design prevents the current CAV decision-making process from primarily reacting to the behavior of HDVs instead of fostering equal interaction, thereby ensuring that the system is operated fairly.

Finally, the null player property of the Shapley value refers to the condition where if a participant's marginal utility is zero, their Shapley value is also zero. This participant no longer has any effect on the system.
\begin{equation}
    \mathcal{S}(c-\{i\}) = \mathcal{S}(c-\{j\}) \iff \varphi_{i}(\mathcal{S})=0
    \label{eq: null player property}
\end{equation}

Given the open boundaries of the traffic system, which allows for real-time updates of system participants, the null player property effectively addresses scenarios where system participants change. As vehicles gradually enter or leave bottleneck areas, the Shapley value can be adjusted accordingly to accurately measure each vehicle's impact on the system. More specifically, for an intersection, vehicles at the intersection may have a higher Shapely value because they are in a more dangerous area, resulting in an increase in the group-related reward and marginal contribution. In contrast, a vehicle leaving the system will have a Shapley value that gradually decreases to zero as its impact on the system diminishes. 

Considering that this study is oriented toward real-world engineering applications, we introduce a batch-based coarsening mechanism to improve the efficiency of Shapley value computation when computational resources are limited. This mechanism aggregates spatially or behaviorally correlated vehicles into a single supernode, thereby reducing the combinatorial complexity while preserving the key interaction relationships among participants. Each supernode $b$ represents a coalition of vehicles that exhibit highly similar dynamic characteristics within a short time horizon. The Shapley value of the supernode is first computed as $\varphi_b(\mathcal{S})$ and then redistributed to individual vehicles within the batch according to their singleton potentials, as shown in Eq.~\ref{eq:supernode_shapley}.

\begin{equation}
\varphi_{i}(\mathcal{S}) =
\begin{cases}
    \displaystyle
    \varphi_{b}(\mathcal{S}) \cdot 
    \frac{\mathcal{S}(\{i\})}
    {\sum_{j \in b} \mathcal{S}(\{j\})}, 
    & \text{if } \sum_{j \in b} \mathcal{S}(\{j\}) > 0, \\[10pt]
    \displaystyle
    \frac{\varphi_{b}(\mathcal{S})}{|b|}, 
    & \text{otherwise.}
\end{cases}
\label{eq:supernode_shapley}
\end{equation}
This hierarchical approximation inhibits the combinatorial explosion, enabling near real-time computation while preserving the fairness and interpretability of Shapley-based credit assignment. The approximation error is bounded and can be adjusted by tuning the batch size threshold $|b|$, which controls the granularity of vehicle aggregation.

After the Shapley value is introduced to the game, the original exact potential game is extended to a weighted potential game, where the change in individual utility is proportional to the change in system utility. By incorporating the Shapley value, the framework effectively identifies key agents that have a greater impact on the cooperative system, ensuring that their behaviors receive higher priority during the optimization process. This allows the system to allocate resources and adjust decision-making strategies more efficiently, ultimately enhancing overall performance.

Furthermore, considering the various decision preferences of real-world human drivers, their observed behavior may significantly deviate from the optimal cooperative strategy. To address this issue, we continuously refine HDV preference estimations by comparing observed actions with estimated ones.

\subsubsection{Adjusted estimation of HDVs with back-propagation} 
After the Shapley value is introduced, the cooperative problem is modeled as a weighted potential game. By optimizing the system utility that satisfies the definition of a potential function, the optimal cooperative actions can be obtained, ensuring equilibrium between each individual and the system. However, in the real world, many uncontrolled HDVs may not obey these optimal cooperative actions. Instead, they tend to make decisions on the basis of their own preferences and decision logic. Additionally, some self-interested HDVs may prioritize their own needs over the collective good, which can lead to deadlock or collisions. Therefore, CAVs must maintain an accurate understanding of the HDVs in the system and adjust the optimal cooperative decisions accordingly.

Considering that behavior is fundamentally driven by intrinsic preferences, aligning optimal cooperative actions with observed HDV behavior can be approximated by accurately identifying human driver preferences. Therefore, a set of relative weights is designed to represent the driving preferences of different HDVs. In this process, the actions generated by the APG with respect to the HDVs are referred to as the estimated action. By comparing the differences between the estimated action and the observed action, we update the estimated preference weights to help the system generate the optimal cooperative action that better aligns with the observed action of HDVs.

By incorporating the relative weights that represent the individual driving preferences of the drivers into Eq.~\ref{eq: individual utility}, we obtain the following:
\begin{equation}
    u_{i} = \sum_{t=0}^{T}\gamma^{t}(\alpha_{i}\cdot r_{s,i}^{t}(a_{i}) +  \sum\limits_{j\in N,j\neq i}\beta_{i}\cdot r_{g,ij}^{t}(a_{i},a_{j})) 
    \label{eq: individual utility preference}
\end{equation}
where $\alpha$ and $\beta$ are two real numbers that depict the vehicle's preference when it makes decisions. A larger value of $\alpha$ indicates a greater focus on self-related benefits, making the individual more likely to exhibit aggressive behavior. In contrast, a larger value of $\beta$ signifies a greater sensitivity to conflicts with other road users, leading to more conservative behavior than expected.

The back-propagation (BP) algorithm has emerged as a major breakthrough in the development of neural networks and is the basis of many deep learning training methods. It constantly iterates the parameters of the model to gradually narrow the difference between the estimated result and ground truth through the gradient descent method. Therefore, following this approach, the human driver preference estimation can be updated through Eq.~\ref{eq: bp}.
\begin{equation}
\begin{split}
\begin{aligned}
    &\alpha_{i}^{t} = \alpha_{i}^{t-1} - \mu \frac{\partial (\widehat{r}_{s,i}^{t-1} - r_{s,i}^{t-1})}{\partial \alpha_{i}^{t-1}} \\
    &\beta_{i}^{t} = \beta_{i}^{t-1} - \mu \frac{\partial (\sum\limits_{j\in N,j\neq i} \widehat{r}_{g,ij}^{t-1} - \sum\limits_{j\in N,j\neq i} r_{g,ij}^{t-1})}{\partial \beta_{i}^{t-1}}
    \label{eq: bp}
\end{aligned}
\end{split}
\end{equation}
where $\mu$ is the learning rate, $\widehat{r}_{s,i}^{t-1}$ and $\widehat{r}_{g,ij}^{t-1}$ are the rewards derived from potential game-estimated action $\widehat{a}_{i}^{t-1}$, and ${r}_{s,i}^{t-1}$ and ${r}_{g,ij}^{t-1}$ are the rewards derived from HDV-observed action ${a_{i}^{t-1}}$.

Through the comparison between rewards, $\alpha$ and $\beta$ gradually fit the true degree of selfishness and altruism of every HDV in our cooperative driving system. As a result, the output of the APG for HDVs will be closer to the observed action, with the risks associated with misunderstanding and misjudgment being minimized. Finally, Eq.~\ref{eq: potential function} can then be written as follows:
\begin{equation}
    \mathcal{S}(\boldsymbol{a^*}) = \sum_{t=0}^{T}\gamma^{t}(\sum\limits_{i \in N}\varphi_i \cdot\alpha_{i}\cdot r_{s,i}^{t}(a_{i}) +  \sum\limits_{i \in N}\sum\limits_{j\in N,j< i}\frac{\varphi_i+\varphi_j}{2} \cdot\frac{\beta_i+\beta_j}{2}\cdot r_{g,ij}^{t}(a_{i},a_{j}))
    \label{eq: finally potential function}
\end{equation}
By optimizing Eq.~\ref{eq: finally potential function} and updating its parameters, the optimal cooperative action list for mixed-traffic scenarios can be obtained.

Algorithm~\ref{alg: cooperative driving modeling process} outlines the cooperative decision-making process of the APG framework in mixed-traffic environments. It takes the vehicle type, previous position, and individual driving preferences as input. The framework first generates the optimal cooperative action while simultaneously updating its estimation of HDV preferences on the basis of their observed decisions. 

\begin{algorithm}[htbp]
\caption{Cooperative driving process in mixed traffic.}\label{alg: cooperative driving modeling process}
\KwIn{Vehicle state list $\boldsymbol{s}$, Vehicle type list $\boldsymbol{\mathcal{T}}$, Preference of each vehicle at last frame $\boldsymbol{\alpha^{t-1}}$, $\boldsymbol{\beta^{t-1}}$\, Shapley value each vehicle at last frame $\boldsymbol{\varphi_{t-1}}$;}
\KwOut{Vehicle action list $\boldsymbol{a^*}$, Vehicle trajectory list $\boldsymbol{\zeta}$}

Initialize action list $\boldsymbol{a^{t}} \leftarrow []$\;
Generate optimal cooperative action list $a^{APG}$ through Eq.~\ref{eq: finally potential function}\;
\ForEach{$i \in N$}{
    \uIf{$\mathcal{T}_i$ is not CAV}{Observe its observed action $a_{i}^{HDV}$\;
    Update the estimated preference $\alpha_{i}^{t}$ and $\beta_{i}^{t}$ of HDV $i$ through Eq.~\ref{eq: bp}\;
    Add action to list $\boldsymbol{a^*} \stackrel{+}\leftarrow a_{i}^{HDV}$\;
    }
    \uElse{Add action to list $\boldsymbol{a^*} \stackrel{+}\leftarrow a_{i}^{APG}$\;}
    Calculate the Shapley value $\varphi_{i}^{t}$ of each vehicle $i$\;
}
Update vehicle state $\boldsymbol{s^{t+1}}$ with action $\boldsymbol{a^*}$\;
Add state to trajectory $\zeta \stackrel{+}\leftarrow \boldsymbol{s^{t+1}}$\;
\end{algorithm}

\subsection{Generate the optimal action through searching the NE} 
Considering the large number of vehicles and the complex relationships in bottleneck areas, the problem to be solved involves many parameters and constraints. Additionally, since Eq.~\ref{eq: finally potential function} is a nonlinear equation, achieving real-time NE solving and parameter updates is challenging under these conditions. Therefore, we first perform a Taylor expansion of the system utility function and constraints and obtain the following:
\begin{equation}
\begin{split} \label{eq: problem after Taylor}
&\min \quad \mathcal{S}(x) = \frac{1}{2}\Delta x^{T} H_k\Delta x + \nabla \mathcal{S}(x_k)^{T}\Delta x \\
& s.t. \quad \left\{\begin{array}{lc}
\nabla c_{u}(x)^{T} \Delta x + c_{u}(x_k)\leq 0 \quad (u=1,2,...,p)\\
\nabla d_{i}(x)^{T} \Delta x + d_{i}(x_k)\leq 0 \quad (i=1,2,...,n)\\
\end{array}\right.
\end{split}
\end{equation}
where \( H_k \) is the second derivative of the system utility function at the \( k \)-th iteration, \( \nabla f(\boldsymbol{x_k}) \) is the first derivative of the system utility function, \( c_u \) represents the \( u \)-th constraint in the total \( p \) collision constraints, and \( d_i \) represents the acceleration constraint for vehicle \( i \) on the basis of its dynamics. Afterward, by introducing Taylor expansion at iteration point $\boldsymbol{x_k}$, the nonlinear objective function is reduced to a quadratic function, and its constraint conditions are reduced to linear functions. 

In the context of the constrained optimization problem as delineated in Eq.~\ref{eq: problem after Taylor}, the associated Lagrangian can be articulated as follows:
\begin{equation}
\begin{split}
    L(x, \lambda, \mu) &=\mathcal{S}(x)+\lambda\cdot\sum_{u=0}^{p}c_{u}(x_k)^{2} +\mu\cdot\sum_{i=0}^{n}d_{i}(x_k)^{2}\\
    &= \frac{1}{2}\Delta x^{T} H_k\Delta x + \nabla \mathcal{S}(x_k)^{T}\Delta x+\lambda\cdot\sum_{u=0}^{p}c_{u}(x_k)^{2} +\mu\cdot\sum_{i=0}^{n}d_{i}(x_k)^{2}
    \label{eq: lagrangian}
\end{split}
\end{equation}
Unlike typical sequential quadratic programming (SQP), we replace the constraints themselves with the residuals of the collision constraints and dynamics constraints in this case. During the optimization process, the optimal solution that satisfies the constraints is gradually approached using least squares rather than directly incorporating the constraints into the objective function. This approach effectively improves the optimization speed when there are many constraints.

By optimizing Eq.~\ref{eq: lagrangian}, the update direction $\Delta x$ can be obtained. The maximum step size $l$ that satisfies Eq.~\ref{eq: update direction} is determined through a backtracking line search, i.e.,
\begin{equation}
\begin{split}
    l = \text{argmax}\{l|\mathcal{S}(x_k+l\cdot\Delta x)\le \mathcal{S}(x_k)+ l\cdot\nabla \mathcal{S}(x_k)^{T}\Delta x\}
    \label{eq: update direction}
\end{split}
\end{equation}

Therefore, the iteration point \(x_{k+1} \) can be updated accordingly $x_{k+1} = x_k + \alpha_k \cdot \Delta x$. Furthermore, the Hessian matrix can be updated using the Broyden--Fletcher--Goldfarb--Shanno (BFGS) method. By iteratively updating an approximation of the Hessian matrix, computational costs are reduced.
\begin{equation}
    H_{k+1} = H_k + \frac{y_k y_k^T}{y_k^T x_k} - \frac{H_k x_k x_k^T H_k}{x_k^T H_k x_k}
    \label{eq: update Hessian}
\end{equation}
where $y_k=\nabla \mathcal{S}(x_{k+1})-\nabla \mathcal{S}(x_k)$ is the change in the gradient of the system utility. Finally, when the gradient condition satisfies the convergence criteria or the maximum number of iterations is reached, the approximate optimal solution \(x^* \) to the original function is returned:
\begin{equation}
    \left\| \frac{1}{2}\Delta x^{T} H_k\Delta x + \nabla \mathcal{S}(x^*)^{T}\Delta x+\lambda\cdot\sum_{u=0}^{p}c_{u}(x^*)^{2} +\mu\cdot\sum_{i=0}^{n}d_{i}(x^*)^{2} \right\|_\infty \leq \epsilon,
    \label{eq: convergence}
\end{equation}

In summary, by modeling the coordination problem as a potential game, we avoid explicitly assuming that players are inherently cooperative or competitive. Instead, competitive and cooperative behaviors emerge naturally through the optimization of the potential function. Algorithm~\ref{alg: p-func optimize} summarizes the main process of potential function optimization.

\begin{algorithm}[htbp]
\caption{Potential function optimization.}\label{alg: p-func optimize}
\KwIn{Initial action $a_0$, maximum iterations $I_{\text{max}}$, tolerance $\epsilon$}
\KwOut{Optimal cooperative action $\boldsymbol{a^*}$}
\While{$i < I_{\text{max}}$}
    {Construct the approximate objective function through Eq.~\ref{eq: problem after Taylor}\;
    Generate the Lagrangian function and determine the optimization direction with Eq.~\ref{eq: lagrangian}\;
    Determine the step size through backtracking line search through Eq.~\ref{eq: update direction}\;
    Update the Hessian matrix based on Eq.~\ref{eq: update Hessian}\;
    \If{Eq.~\ref{eq: convergence} is satisfied}{
    break
    }
}
Output the optimal cooperative solution $\boldsymbol{a^*}$\;
\end{algorithm}

\section{Experiment and Analysis}
To validate the effectiveness of the proposed APG framework, we first conduct an ablation study to compare success rates with and without adaptive updates with various penetration rates. We then benchmark its performance against that of other representative cooperative methods to demonstrate its advantages in terms of safety and efficiency. Finally, we evaluate the framework through real-world vehicle tests in various scenarios at Tongji Small Town (TJST).

Accidents at intersections account for 44\% of all traffic accidents, with more than 2.8 million incidents reported annually \citep{azimi2014stip}, making intersections among the most challenging environments for autonomous driving. Furthermore, according to the Beijing Autonomous Vehicle Road Test Report, over 39\% of CAV disengagement events occur at intersections, highlighting the difficulties associated with operating in mixed-traffic environments \citep{beijing2022}. Therefore, we conduct both simulations and field tests using unsignalized intersections as a representative scenario. Additional videos in roundabouts and other environments are available at \url{https://fangshiyuu.github.io/AWSW-PG/}.

\subsection{Experimental Setup}
\subsubsection{Reproduction of HDVs} A realistic background traffic simulation is essential for evaluating a cooperative driving system. While the intelligent driver model (IDM) is typically used, it cannot capture reciprocal decision-making, limiting the  interactive behavior of HDVs. To address this issue, we model HDVs with various preferences by integrating IRL with a noncooperative Bayesian game.

On the basis of our previous research \citep{10529605}, HDV decisions are reproduced through a four-step process. First, interaction data were collected at an intersection in Shanghai. Second, drivers' decisions were classified into three groups using K-means and the elbow method \citep{cui2020introduction, umargono2019k}. Third, a noncooperative Bayesian game modeled HDV decisions, using six typical actions \citep{tian2020game}. Finally, IRL was used to calibrate the HDVs' intrinsic preferences, and drivers were classified as aggressive, conservative, or normal. 

\subsubsection{Implementation Details} 
At the beginning of each case, we generated the initial position and initial speed of each vehicle. The initial position was randomly selected within a range of 30 m to 50 m from the entrance lane to its corresponding stop line, and the initial velocity was randomly selected within a range of 6 m/s to 10 m/s. Finally, according to the preset penetration rates, the vehicle types were randomized. The initial reward weights in Eq.~\ref{eq: finally potential function} were chosen to be $\alpha=[2,1,0.05]$ and $\beta=10$. The discount factor, planning horizon, learning rate, and radius of the cooperation system were set to $\gamma=0.9$, $T=8$, $\mu=1$, and \(R=\) 80 m, respectively. All these parameter choices were validated through pre-experiments. For each penetration rate, 100 trials involving approximately 800 vehicles and 40,000 interaction events in total were conducted. Furthermore, all the experimental results were statistically validated, demonstrating that the performance of the proposed APG framework significantly differs from that of all the other methods. 

\subsection{Ablation Studies on APG}

First, to evaluate the effectiveness of the proposed adaptive weight method, we conducted an ablation study on the APG framework. Specifically, two key components were independently disabled for comparison: (1) the Shapley value update, which governs the adaptive adjustment of the system-based impact of each vehicle, and (2) the back-propagation module, which updates HDV preference estimations on the basis of observed behaviors.
Table~\ref{tab: ablation studies} presents the success rates at different levels of CAV penetration, where the numbers in parentheses denote the performance differences compared with the full APG configuration.
In addition, a scenario is considered successful if it avoids both collisions and inefficiencies, where inefficiency is defined as the failure of all vehicles to clear the intersection within a 40-second time limit.


\begin{table*}[ht]
    \caption{Ablation studies on an APG with various rates of penetration }
    \centering
    \begin{tabular}{c c| c |c |c |c |c |c}
    \hline
     \multicolumn{2}{c|}{Component} & \multicolumn{6}{c}{Penetration rates}  \\
     \hline
      Shapley & BP & 37.5\% & 50\% & 62.5\% & 75\% & 87.5\% & 100\%\\
      \hline
       $\times$ & $\times$ & 79\%{\footnotesize{(-14\%)}} & 87\%{\footnotesize{(-6\%)}} & 88\%{\footnotesize{(-9\%)}} & 90\%{\footnotesize{(-7\%)}} & 92\%{\footnotesize{(-7\%)}} & 96\%{\footnotesize{(-3\%)}} \\ 
       $\times$ & $\checkmark$ & 89\%{\footnotesize{(-4\%)}} & 90\%{\footnotesize{(-3\%)}} & 86\%{\footnotesize{(-11\%)}} & 91\%{\footnotesize{(-6\%)}} & 95\%{\footnotesize{(-4\%)}} & 96\%{\footnotesize{(-3\%)}} \\ 
      $\checkmark$  & $\times$ & 90\%{\footnotesize{(-3\%)}} & 91\%{\footnotesize{(-2\%)}} & 90\%{\footnotesize{(-4\%)}} & 93\%{\footnotesize{(-4\%)}} & 94\%{\footnotesize{(-5\%)}} & 97\%{\footnotesize{(-2\%)}} \\ 
      $\checkmark$ & $\checkmark$ & \cellcolor{gray!20}93\% & \cellcolor{gray!20}93\% & \cellcolor{gray!20}97\% & \cellcolor{gray!20}97\% & \cellcolor{gray!20}99\% & \cellcolor{gray!20}99\% \\ 
      \hline
\end{tabular}
    \label{tab: ablation studies}
\end{table*}

As shown in Table~\ref{tab: ablation studies}, both the adaptive update of the Shapley values and the refinement of the HDV estimations significantly improve the rate of success across all levels of penetration. Specifically, without the adaptive weight method, the average rate of success decreases by 7.5\%. When only the Shapley value update or the HDV estimation refinement is applied, the average rate of success decreases by 4\% or 5.2\%, respectively. These results demonstrate that the proposed adaptive weight method plays a crucial role in enhancing the applicability of APGs in mixed traffic environments.


To further illustrate the effect of the adaptive weight method, we visualize a scenario with identical initial conditions. The vehicle trajectories without adaptive weight updates are shown in Fig.~\ref{fig: pg-ablation-without}. In this scenario, there are four CAVs, two aggressive HDVs, one normal HDV, and one conservative HDV. The number displayed in the upper right corner of each vehicle indicates its current speed.  

\begin{figure*}[ht]
  \begin{center}
  \centerline{\includegraphics[width=6.5in]{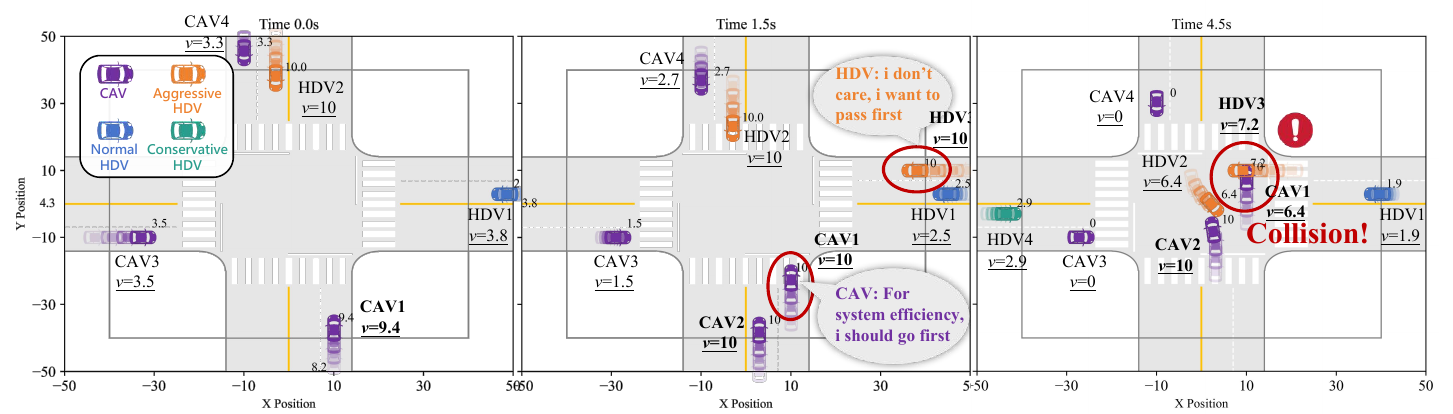}}
  \caption{Vehicle trajectories without adaptive weight updates.}\label{fig: pg-ablation-without}
  \end{center}
  \vspace{-0.8cm}
\end{figure*}

In this case, CAV1 had already crossed the stop line and entered the intersection when HDV3 arrived. To optimize the overall efficiency, the system prioritized CAV1's passage, allowing it to maintain a desired speed of 10 m/s. However, HDV3, driven by its preference to maximize personal efficiency, continued accelerating after entering the cooperative system, disregarding the broader impact of the system and ultimately causing a collision.

Furthermore, Fig.~\ref{fig: pg-ablation-with} illustrates the results after the adaptive weight method was applied. In this case, CAV1 and CAV2 detection revealed that HDV3 did not decelerate as initially estimated by the APG but instead maintained a relatively high speed. According to Eq.~\ref{eq: bp}, the system updated its estimation, recognizing that HDV3 had a stronger preference for efficiency. Consequently, during the optimization process, priority was given to maximizing the speed of HDV3, prompting CAV1 and CAV2, which potentially conflicted with HDV3, to slow. 

\begin{figure*}[t]
  \begin{center}
  \centerline{\includegraphics[width=6.5in]{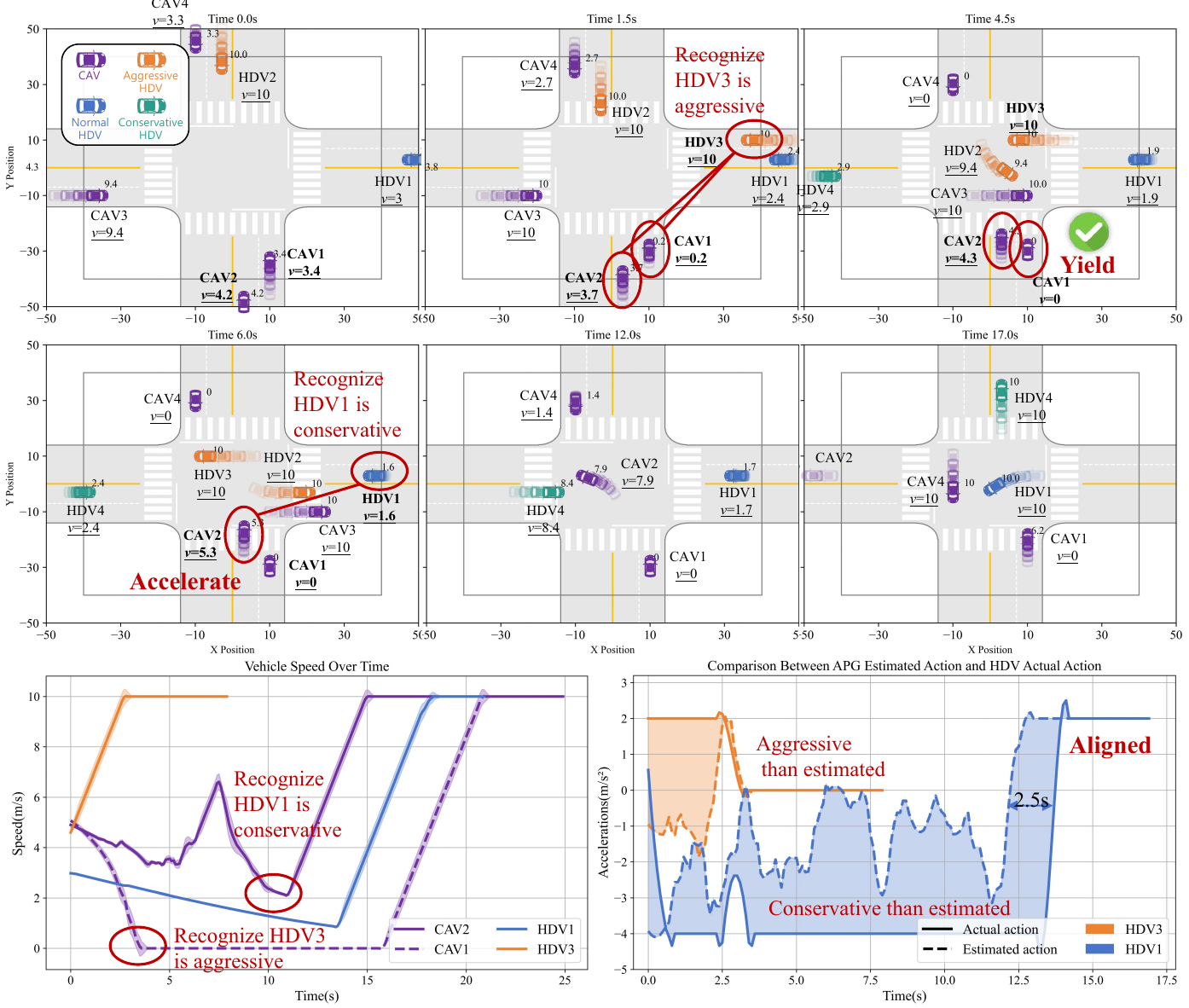}}
  \caption{Vehicle trajectories, speeds, and accelerations with adaptive weight updates.}\label{fig: pg-ablation-with}
  \end{center}
  \vspace{-0.8cm}
\end{figure*}

Moreover, during the interaction, HDV1 consistently exhibited more conservative behavior than initially estimated, leading the APG to prioritize CAV2's acceleration and to clear the intersection efficiently. A notable pattern emerges when estimated and observed actions are compared: at 11(13 s and 13(15 s, APG's predicted trajectories for HDV1 remain nearly identical. This occurs because, from a system perspective, when vehicle behaviors are controllable, earlier acceleration improves overall efficiency. In contrast, from an individual perspective, when other vehicles' actions are uncertain, HDV1 tends to delay acceleration until absolute safety is ensured. These findings reinforce that when HDVs adhere to the APG's optimal cooperative solutions, both system efficiency and individual performance improve. This will be further validated in future field tests.

In summary, with the adaptive weight method, all the vehicles successfully and efficiently navigated through the intersection without collisions.

\subsection{Comparison with Different Methods}
To further validate the effectiveness of the proposed APG in mixed-traffic cooperative scenarios, we compared it with several cooperative decision-making methods, including Projected-IDM (PIDM) \citep{albeaik2022limitations}, iDFST \citep{chen2022conflict}, CGame \citep{hang2021cooperative}, the Monte Carlo tree search (MCTS) \citep{kurzer2022learning}, Multiagent path finding (MAPF) \citep{yan2024multi}, and temporal safety constraints (TSCs) \citep{arfvidsson2024towards}.

\subsubsection{Overall Performance}
Fig.~\ref{fig: success_rate} illustrates the rates of success of the different methods, revealing two distinct trends. First, the rates of success of the PIDM, iDFST, MCTS, MAPF and TSC approaches decline as the level of penetration increases. A deeper analysis suggests that this is due primarily to a decrease in average efficiency at higher levels of penetration, which will be discussed in detail.

\begin{figure*}[ht]
  \begin{center}
  \centerline{\includegraphics[width=5in]{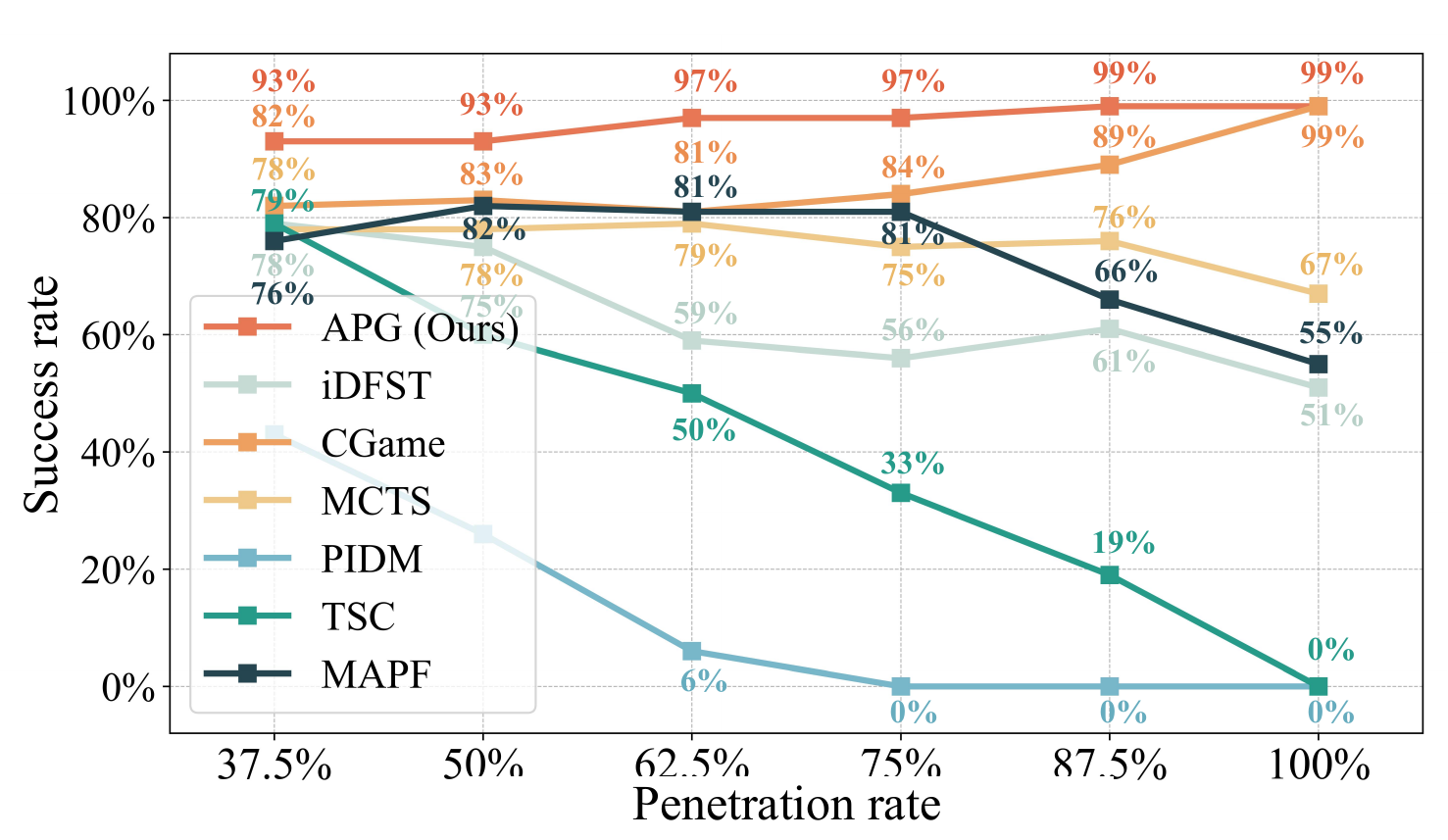}}
  \caption{Rates of success of different methods with various rates of penetration.}\label{fig: success_rate}
  \end{center}
  \vspace{-0.8cm}
\end{figure*}

Here, we emphasize a counterintuitive observation: while a higher proportion of CAVs should improve overall system performance, both real-world open-road data and our simulation results indicate the opposite. Instead of increasing efficiency, current CAVs are often the main contributors to its decline, which aligns with existing challenges in real-world deployments.

In contrast, CGame and APG exhibit completely different trends, with rates of success steadily increasing as the rate of penetration increases. This is mainly because game-theoretic methods are based on rational decision-making logic, making them more adaptable in mixed traffic environments. However, the rate of success of CGame remains lower than that of an APG because of its overly strong assumption that all cooperative participants will act collaboratively, indicating that various preferences of real-world human drivers are neglected. We further explore this issue in the Safety Evaluation section.

\subsubsection{Safety Evaluation}

The rate of success reflects the overall performance of the different methods. To further evaluate the proposed APG framework in terms of safety and efficiency, we introduce the collision rate and delay as key metrics. Table~\ref{tab: collision rate} presents the performance of the various methods with different rates of penetration.  

According to Table~\ref{tab: collision rate}, two key observations emerge from the results. First, the APG cooperative decision-making framework consistently achieves the lowest collision rate across all levels of penetration. Notably, when the rate of penetration exceeds 50\%, the collision rate decreases to zero. Second, at 100\% penetration, the collision rate is zero for all methods. These results can be attributed to the absence of uncontrolled HDVs in the environment. Since all the vehicles are CAVs following the same decision logic, ROW conflicts naturally do not occur.

Additionally, several other findings can be drawn from Table~\ref{tab: collision rate}. In terms of both the average and maximum rates of collision, CGame performs the worst, even falling behind rule-based methods such as iDFST. This further validates our earlier assertion that most existing game-theoretic methods assume that participants are either fully cooperative or entirely competitive. As a result, their performance deteriorates in the presence of HDVs with various preferences. Furthermore, the rationale behind the proposed APG approach, which avoids making explicit assumptions about participants and instead seeks equilibrium between the individual and the system, is revealed. These results also indicate that the APG better aligns with HDV decision-making logic and is more suitable for mixed-traffic environments.

\begin{table*}[ht]
    \caption{Comparison of the rates of collision for methods with various rates of penetration }
    \centering
    \centering
    \begin{tabular}{c|c |c |c |c |c |c}
    \hline
     \multirow{2}{*}{Models} & \multicolumn{6}{c}{Rates of penetration}  \\
     \cline{2-7}
      & 37.5\% & 50\% & 62.5\% & 75\% & 87.5\% & 100\%\\
      \hline
      PIDM & 5\% & 4\% & 4\% & 2\% & 6\% & \cellcolor{gray!20}0\% \\ 
      iDFST & 5\% & 6\% & 8\% & 11\% & 1\% & \cellcolor{gray!20}0\% \\ 
      CGame & 9\% & 11\% & 18\% & 10\% & 9\% & \cellcolor{gray!20}0\% \\
      MCTS & 9\% & 9\% & 7\% & 4\% & 3\% & \cellcolor{gray!20}0\% \\
      TSC & 21\% & 15\% & 8\% & 4\% & 4\% & \cellcolor{gray!20}0\% \\
      MAPF & 23\% & 18\% & 19\% & 6\% & 5\% & \cellcolor{gray!20}0\% \\
      APG & \cellcolor{gray!20}1\% & \cellcolor{gray!20}1\% & \cellcolor{gray!20}0\% & \cellcolor{gray!20}0\% & \cellcolor{gray!20}0\% & \cellcolor{gray!20}0\% \\ 
     \hline
\end{tabular}
    \label{tab: collision rate}
\end{table*}


Furthermore, in real-world interactions, collisions reflect only the lower bound of safety performance. Situations involving extremely small intervehicle gaps during alternating passages are also considered highly dangerous. To provide a more comprehensive evaluation of safety, we introduce the postencroachment time (PET) metric \citep{peesapati2018can, cooper1984experience}, which measures the temporal gap between two vehicles that are successively leaving the shared conflict area. According to previous studies, when the PET is less than 1 s, the interaction can be classified as high risk and should be avoided whenever possible.

Fig.~\ref{fig: pg-PET} presents the PET distributions for the different methods. Overall, the proposed APG framework maintains relatively large PET values, indicating a high margin of safety during interactions. The numbers shown in the figure denote the proportion of high-risk events excluding collisions. Specifically, our APG method results in only 1.4\% of high-risk events, which is slightly higher than the 1.3\% associated with the PIDM. Considering that the PIDM explicitly constrains intervehicle following distances, achieving a comparable level of safety demonstrates the effectiveness of our method. Combined with the collision rate results in Table~\ref{tab: collision rate}, the APG framework remains the most safety-oriented approach overall.

\begin{figure*}[ht]
  \begin{center}
  \centerline{\includegraphics[width=4.5in]{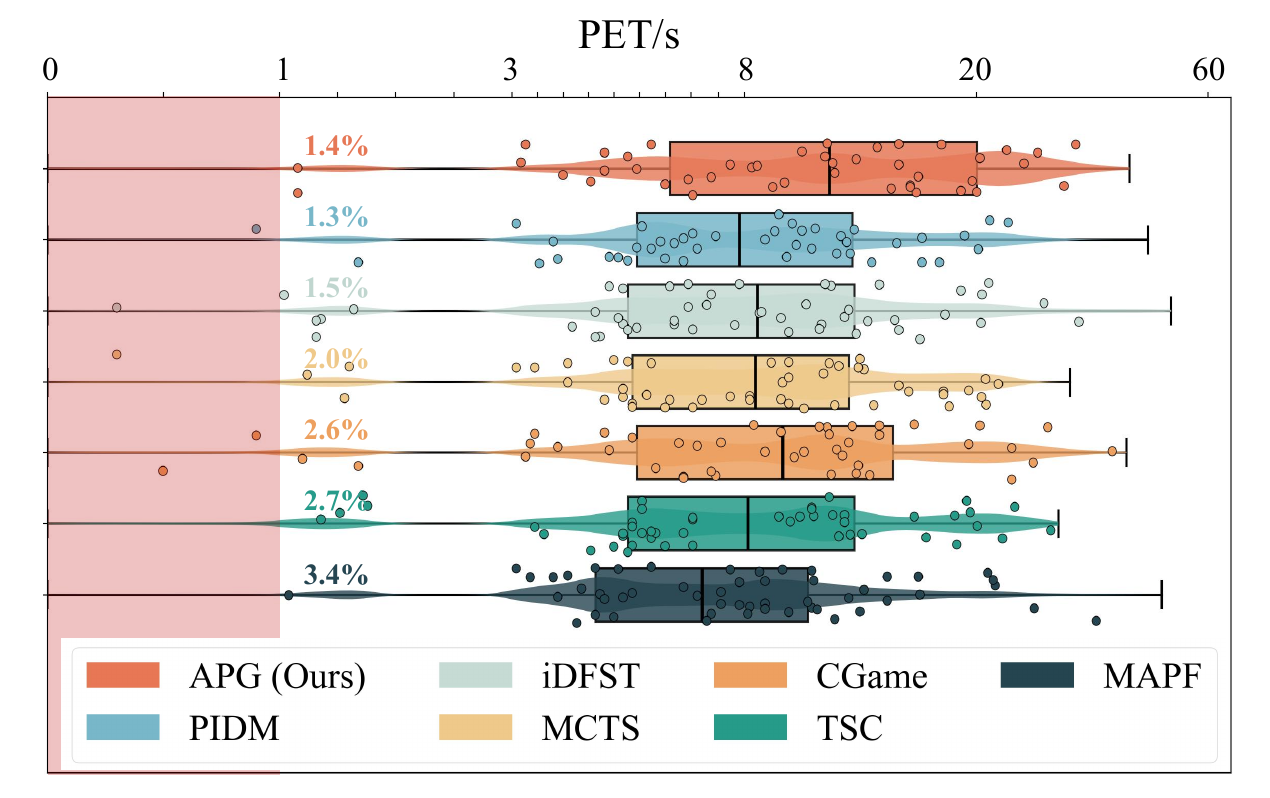}}
  \caption{PET distribution of different methods.}\label{fig: pg-PET}
  \end{center}
  \vspace{-0.8cm}
\end{figure*}

\subsubsection{Efficiency Evaluation}

Delay is a key indicator of driving efficiency. The average delay performance of different methods with various rates of penetration is shown in Fig.~\ref{fig: pg-delay}. The proposed APG framework consistently achieves the lowest delay at all levels of penetration. Notably, at 100\% penetration, the delay is only 7.6 s, with cooperative participants rarely needing to decelerate to a full stop. This finding demonstrates that incorporating individual utility into the modeling of system utility significantly increases both individual efficiency and system-wide efficiency. Additionally, in CGame, as the rate of penetration increases, more vehicles participate in cooperation, leading to a reduction in the average delay. 

In contrast, the PIDM, iDFST, MCTS, MAPF and TSC approaches exhibit entirely opposite trends in efficiency, where performance declines as the rate of penetration increases. At first glance, this seems contradictory to previous studies, but most prior research used the IDM as the background traffic model. This overly rigid design effectively forced HDVs to operate in a first-come, first-served manner, deviating significantly from real-world driving behavior. Consequently, when these methods are applied to environments with various HDVs, their performance trends differ drastically. Notably, at lower rates of penetration, the MAPF and TSC achieve slightly shorter delays than the proposed APG framework does. Nevertheless, their significantly higher collision rates and lower PET values suggest unsafe behaviors, making these methods unsuitable for mixed-traffic scenarios.

\begin{figure*}[ht]
  \begin{center}
  \centerline{\includegraphics[width=5in]{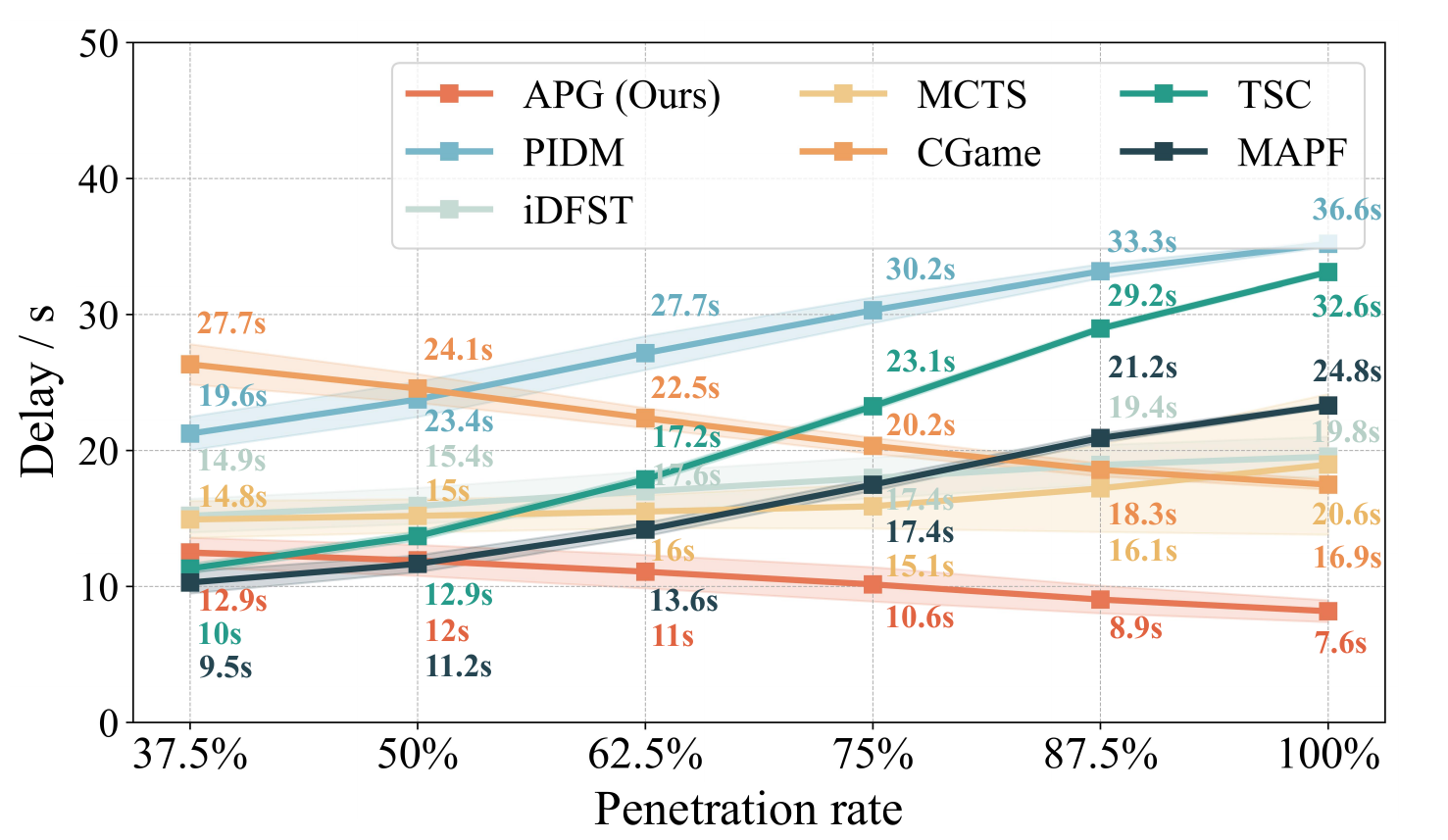}}
  \caption{Average delay performance of different methods with various rates of penetration.}\label{fig: pg-delay}
  \end{center}
  \vspace{-0.8cm}
\end{figure*}

In summary, a comprehensive comparison of APGs with other methods in terms of safety and efficiency reveals that APGs consistently outperform the other approaches across all rates of penetration. This superiority stems primarily from their design, which does not impose assumptions on participant behavior. Instead, APGs derive system utility step by step from a general formulation of individual utility. By leveraging the monotonic relationship between individual and system utility, they simultaneously achieve balanced optimization, making APGs particularly suitable for mixed-traffic environments.

\subsection{Real-World Field Test}
Finally, to verify the feasibility of applying the proposed method in real-world autonomous driving systems, we conducted experiments in multiple scenarios at TJST using a virtual-reality testing system. The main workflow of the testing system is illustrated in Fig.~\ref{fig: pg-vr}. By mapping APG-driven real-field CAVs and virtual HDVs into the same digital space, real-time interaction and cooperation were achieved.

During the experiments, a total of five to twelve vehicles, including two to three real CAVs and several virtual HDVs (V-HDVs), participated in different scenes, such as intersections and roundabouts. Each vehicle was equipped with onboard sensors, including LiDAR sensors, radar sensors, and cameras, and communication modules for V2X and Redis-based interaction to ensure synchronized information exchange between real and virtual agents. The proposed APG algorithm operates at a frequency of 10 Hz, generating control actions in real time on the basis of the latest perception and communication data. All the tests were conducted under the supervision of experienced safety drivers who could take over control at any time to ensure operational safety.

\begin{figure*}[ht]
  \begin{center}
  \centerline{\includegraphics[width=6.5in]{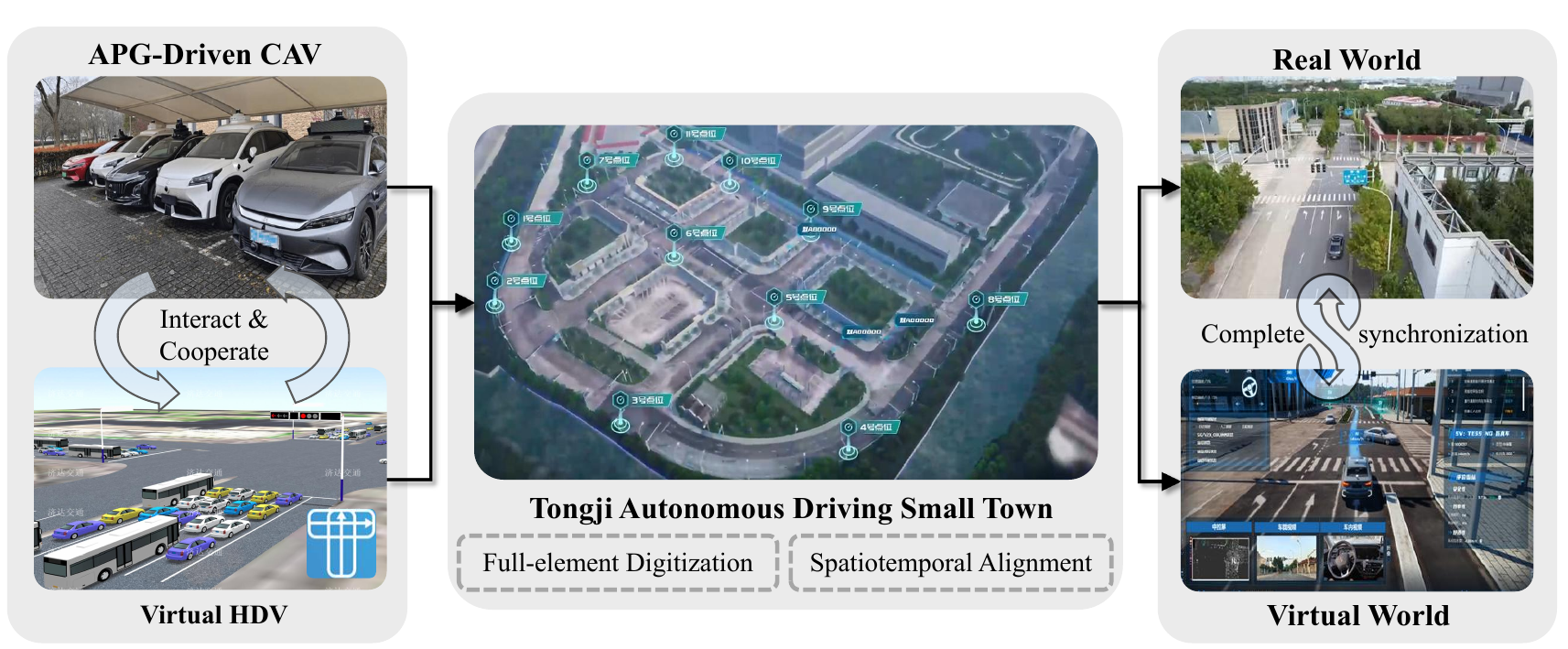}}
  \caption{Workflow of the virtual-reality testing system.}\label{fig: pg-vr}
  \end{center}
  \vspace{-0.8cm}
\end{figure*}

To demonstrate the ability of the proposed cooperative decision-making framework to enhance the performance of existing autonomous driving systems, we first designed a relatively simple scenario involving two CAVs and one HDV. In this case, the CAVs were controlled by the decision-making algorithm of a specific autonomous driving startup, while the HDV was operated by an experienced human driver. The initial positions and vehicle trajectories are shown in Fig.~\ref{fig: pg-field-deadlock}.

Additionally, we plotted speed versus time and distance to destination versus time for all the vehicles. The data show that all three vehicles continuously accelerated during the first 8 s. Around the 10 s mark, their close proximity led to abrupt braking, resulting in a deadlock. Shortly after, the HDV driver accelerated and exited the intersection first, while CAV1 and CAV2 remained stationary. Eventually, at approximately 24 s and 35 s, the safety drivers of CAV1 and CAV2 had to manually intervene to clear the intersection. This case highlights the challenges faced by current CAVs when dealing with multivehicle conflict scenarios in mixed traffic. 

\begin{figure*}[ht]
  \begin{center}
  \centerline{\includegraphics[width=6.5in]{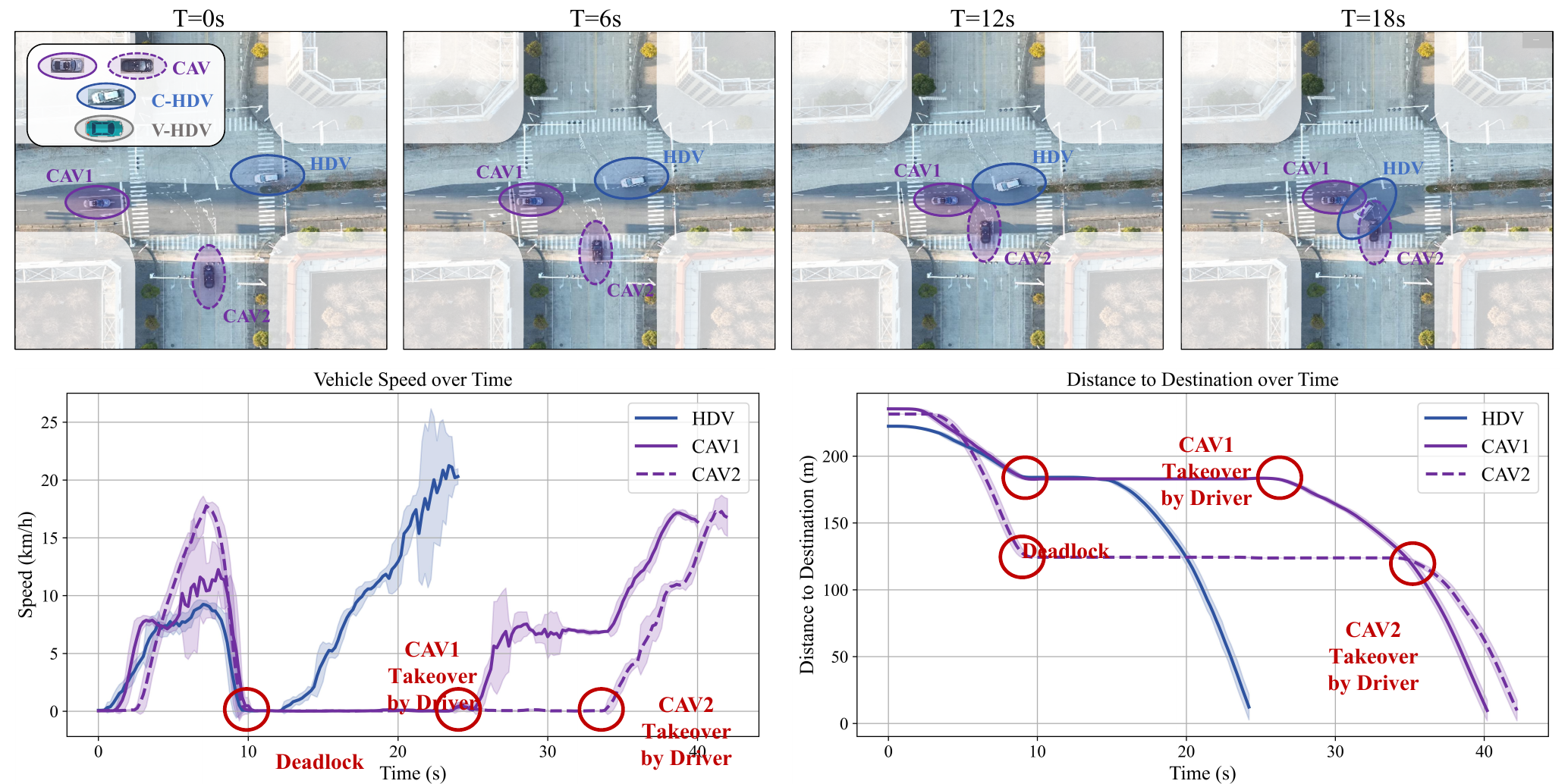}}
  \caption{Vehicle trajectories and performance analysis controlled by a commercial algorithm.}\label{fig: pg-field-deadlock}
  \end{center}
  \vspace{-0.8cm}
\end{figure*}

In contrast, the proposed APG cooperative decision-making method effectively addresses this challenge and remains robust in more complex scenarios. To further validate its effectiveness, we introduced three additional V-HDVs into the scenario shown in Fig.~\ref{fig: pg-vr-test}. Additionally, we equipped HDVs with onboard units (OBUs) to facilitate communication with CAVs and receive the optimal cooperative solutions generated by the APG in real time. However, human drivers were not required to follow the suggested actions of the APG.

During the first 3 s of the interaction, the system prioritizes overall efficiency by considering factors such as the lane position and distance to the intersection. As a result, it is optimal for CAV2, V-HDV2, and V-HDV1 to pass first, leading to a higher acceleration for CAV2, whereas CAV1 and the connected HDV (C-HDV) receive lower acceleration values. However, the uncontrolled V-HDV3 does not recognize this strategy and continues to accelerate. After its preference estimation is updated, the system determines that V-HDV3 is aggressive and intends to pass first. To prevent a collision, the strategy is adjusted by instructing CAV2 to decelerate until V-HDV3 clears the conflict point.  

Afterward, CAV2 sequentially passes the conflict points with CAV1 and C-HDV, with both vehicles accelerating in turn until they exit the intersection. During this process, the driver of C-HDV confidently considers accelerating at the 2 s mark. However, after just 1 s, the driver quickly notice several vehicles approaching the intersection at high speeds, prompting a reevaluation of their decision. As a result, the driver opts to follow the APG-recommended speed instead.

\begin{figure*}[ht]
  \begin{center}
  \centerline{\includegraphics[width=6.5in]{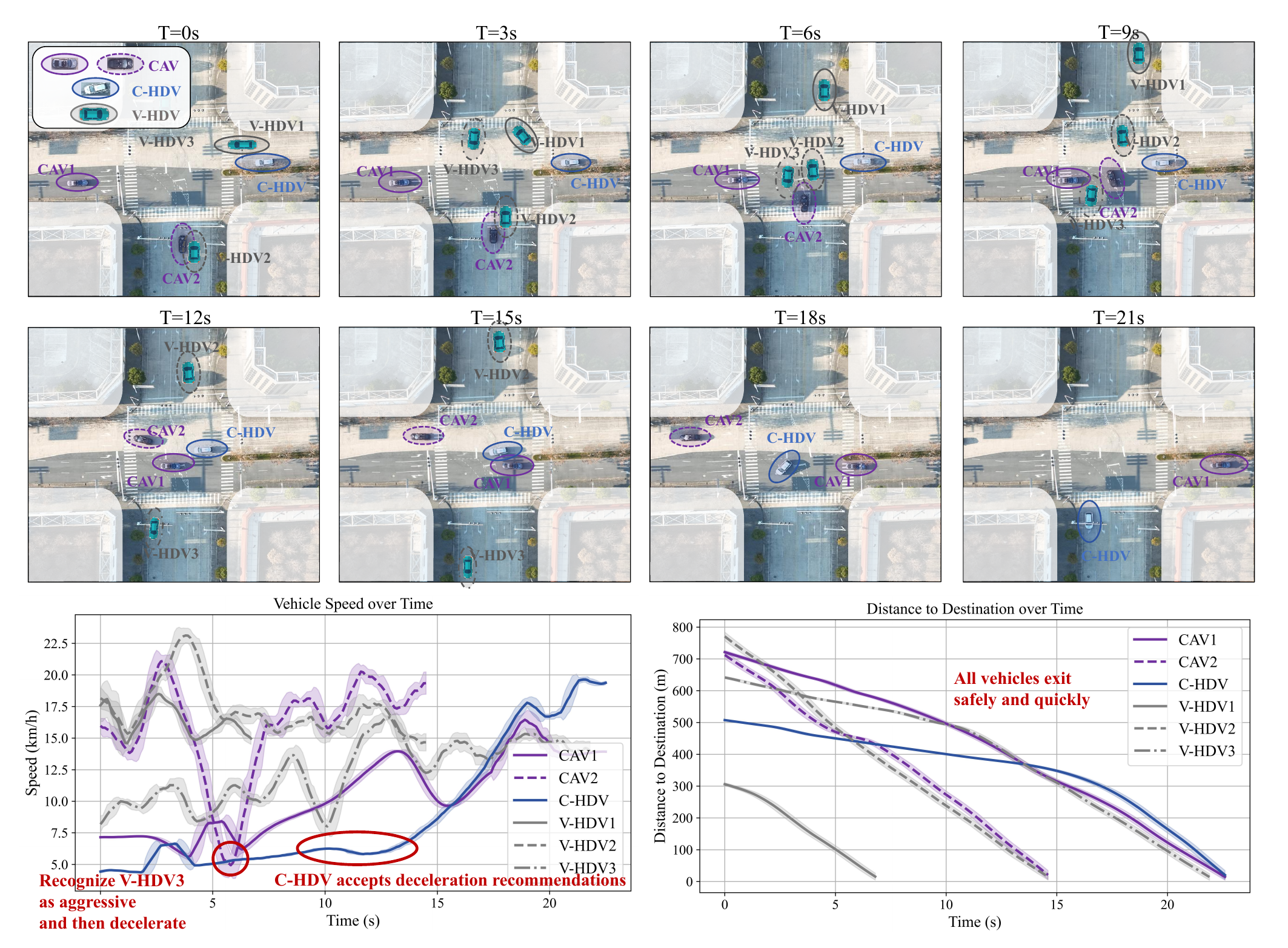}}
  \caption{Vehicle trajectories and performance analysis controlled by the proposed APG framework.}\label{fig: pg-vr-test}
  \end{center}
  \vspace{-0.8cm}
\end{figure*}

A comparison between Fig.~\ref{fig: pg-vr-test} and Fig.~\ref{fig: pg-field-deadlock} reveals that the proposed APG cooperative decision-making framework effectively handles various HDVs in mixed-traffic environments. By leveraging vehicle connectivity, it can transmit optimal cooperative actions to HDVs, guiding them from pursuing individual utility to achieving system-wide optimization and thereby demonstrating its applicability and superiority in real-world scenarios.

\section{Conclusion}
The advancement of autonomous driving and connectivity technologies has made cooperative driving feasible. While theoretically promising for improving traffic safety and efficiency, its performance in mixed-traffic environments has fallen short of expectations. This is due primarily to existing cooperative decision-making methods failing to account for individual needs and heterogeneity. 

To address this issue, we first construct a system utility function that is equipotential with changes in individual utility to ensure that individual equilibrium for all cooperative participants is simultaneously achieved when the system utility is optimized. This enables the joint optimization of both individual- and system-level benefits. To further address the heterogeneity among participants, an adaptive weight method is introduced. It dynamically computes Shapley values and refines the estimation of individual decision preference in real time, enhancing the adaptability of the model in mixed-traffic scenarios. Ablation experiments demonstrate that the proposed APG cooperative framework significantly improves the success rate of CAVs in mixed-traffic scenarios. Comparative evaluations against other cooperative methods further highlight the framework's advantages in terms of safety and efficiency. Finally, a field test was conducted by benchmarking the APG framework against the decision-making algorithm of an autonomous driving startup. The results show that the APG effectively prevents deadlocks and reduces the need for safety driver interventions, demonstrating its capability for real-world deployment.

Despite these promising results, deploying the APG framework in open-road environments still faces challenges related to data privacy and cybersecurity. Since the current cooperative communication relies on OBUs that exchange state and planning information, future work will focus on incorporating secure communication protocols and privacy-preserving mechanisms. In addition, collaboration with regulatory bodies will be necessary to ensure that cooperative driving systems comply with evolving standards and legal requirements for information security.


\section*{Acknowledgments}
This work was supported in part by the State Key Lab of Intelligent Transportation System under Project No. 2024-A002, the National Natural Science Foundation of China (52302502), the State Key Laboratory of Intelligent Green Vehicle and Mobility under Project No. KFZ2408, and the Fundamental Research Funds for the Central Universities.

\section*{Compliance with ethics guidelines}
Shiyu Fang, Xiaocong Zhao, Xuekai Liu, Peng Hang, Jianqiang Wang, Yunpeng Wang, and Jian Sun declare that they have no conflicts of interest or financial conflicts to disclose.

\bibliographystyle{references-format} 
\bibliography{references}

\newpage

\end{document}